\documentclass{article}

\PassOptionsToPackage{numbers,compress}{natbib}
 \usepackage[preprint]{neurips_2026}

\usepackage[utf8]{inputenc} 
\usepackage[T1]{fontenc}    
\usepackage{hyperref}       
\usepackage{url}            
\usepackage{booktabs}       
\usepackage{amsfonts}       
\usepackage{nicefrac}       
\usepackage{microtype}      
\usepackage{xcolor}         
\usepackage{xspace}
\usepackage{amsmath,amssymb,amsthm}
\usepackage{cleveref}
\usepackage{graphicx}
\usepackage{adjustbox}
\usepackage{bm}
\usepackage{tikz}
\usepackage{subcaption}
\usepackage{caption}
\usepackage{wrapfig}
\usepackage{titletoc}

\theoremstyle{plain} 
\newtheorem{theorem}{Theorem}

\newtheorem{corollary}[theorem]{Corollary}

\theoremstyle{definition} 

\newtheorem{assumption}[theorem]{Assumption}

\theoremstyle{remark} 

\newcommand{\ie}{\textit{i.e.}}
\newcommand{\eg}{\textit{e.g.}}
\newcommand{\wrt}{\textit{w.r.t.}\xspace}
\newcommand{\pr}{\mathbb{P}}
\newcommand{\doint}{\text{do}}
\newcommand{\bE}{\mathbb{E}}
\newcommand{\KL}{\mathrm{KL}}

\newcommand{\npc}{NPC\xspace}
\newcommand{\cnpc}{CNPC\xspace}
\newcommand{\pc}{PC\xspace}

\title{Causal Neural Probabilistic Circuits}

\author{%
  Weixin Chen \\
  University of Illinois Urbana-Champaign\\
  \texttt{weixinc2@illinois.edu} \\
  \And
  Han Zhao\\
  University of Illinois Urbana-Champaign\\
  \texttt{hanzhao@illinois.edu} \\
}

\begin{document}

\maketitle

\begin{abstract}
Concept Bottleneck Models (CBMs) enhance the interpretability of end-to-end neural networks by introducing a layer of concepts and predicting the class label from the concept predictions. 
A key property of CBMs is that they support interventions, \ie, domain experts can correct mispredicted concept values at test time to improve the final accuracy. 
However, typical CBMs apply interventions by overwriting only the corrected concept while leaving other concept predictions unchanged, which ignores causal dependencies among concepts.
To address this, we propose the Causal Neural Probabilistic Circuit (\cnpc), which combines a neural attribute predictor with a causal probabilistic circuit compiled from a causal graph. This circuit supports exact, tractable causal inference that inherently respects causal dependencies.
Under interventions, \cnpc models the class distribution based on a Product of Experts (PoE) that fuses the attribute predictor’s predictive distribution with the interventional marginals computed by the circuit.
We theoretically characterize the compositional interventional error of \cnpc \wrt its modules and identify conditions under which \cnpc closely matches the ground-truth interventional class distribution. Experiments on five benchmark datasets in both in-distribution and out-of-distribution settings show that, compared with five baseline models, \cnpc achieves higher task accuracy across different numbers of intervened attributes.
\end{abstract}

\section{Introduction} \label{sec:intro}
Despite achieving high performance across a range of benchmarks~\citep{imagenet, glue}, deep learning models are often black-box and lack interpretability~\citep{stop_explain}, which makes it difficult to understand why a specific decision is made, let alone to interact with the decision process. Concept Bottleneck Models (CBMs)~\citep{cbm, nesy_concepts} offer a promising direction by introducing a concept layer that decomposes the decision process into two stages: a neural concept predictor first infers the values of a predefined set of human-understandable concepts from the input, and a label predictor, often a linear layer, then maps the predicted concept values to the final class label. This architecture provides concept-level explanations and, more importantly, supports interventions, \ie, a human can correct mispredicted concept values. Such interventions are particularly valuable in safety-critical applications such as medical diagnosis~\citep{medical, clinical}, where a domain expert can inspect and correct concept values to improve the accuracy and reliability of the final prediction.

In practice, however, typical CBMs~\citep{cbm, cem, label_free_cbm} implement interventions by overwriting only the corrected concept while leaving all other concept predictions unchanged; see Figure~\ref{fig:framework} (top panel). It ignores the causal dependencies among concepts, where intervening on one concept could lead to updates on other concepts that depend on it~\citep{pearl2009causal, peters2017causal}. For example, knowing that a patient smokes should increase the likelihood of lung cancer. Therefore, to improve intervention efficiency and make better use of costly expert interventions, CBMs should propagate interventions through the causal structure over concepts and the class label, yielding the interventional class distribution for updated prediction.

\begin{figure*}[t]
    \centering
    \setlength{\abovecaptionskip}{4pt}
    \includegraphics[width=\linewidth]{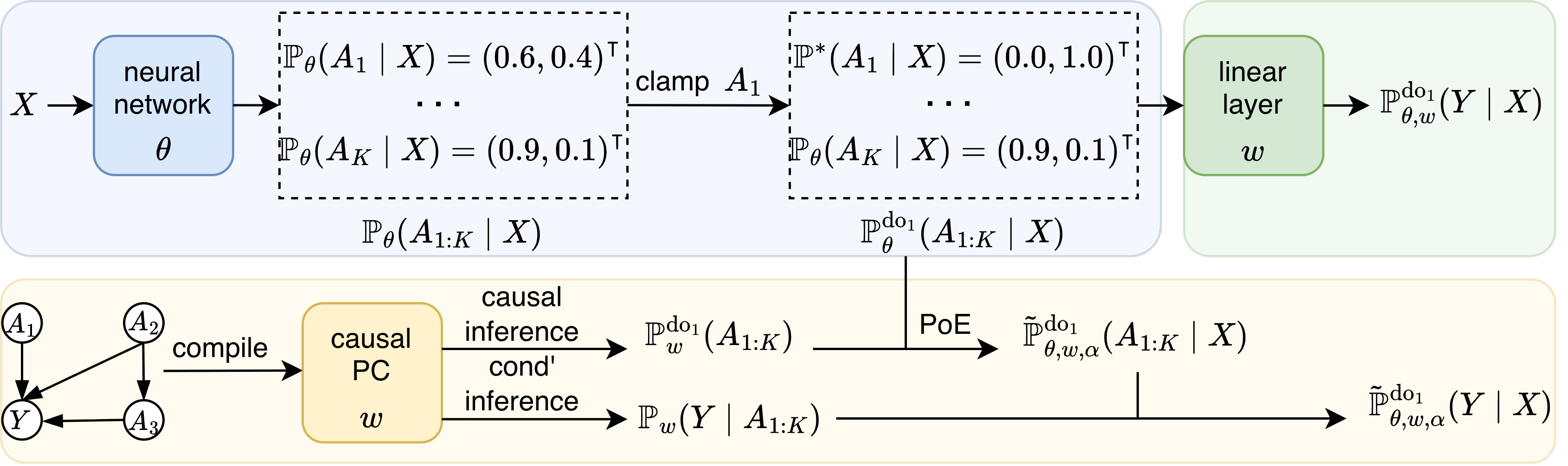}
    \caption{\small 
    Model architectures and intervention procedures of a CBM and a \cnpc.
    A typical CBM (top-left module + top-right module) performs the intervention on $A_1$ by replacing the neural network's predictions for $A_1$ with the ground-truth distribution, while leaving the predictions for other attributes unchanged.
    In contrast, \cnpc (top-left module + bottom module) combines a neural network with a causal \pc compiled from a causal graph, and approximates the interventional class distribution based on a PoE with a balancing weight $\alpha$ that fuses complementary information from the two modules.
    }
    \label{fig:framework}
    \vspace{-8pt}
\end{figure*}

In this paper, however, we show that exactly modeling this interventional class distribution is challenging in CBMs: the interventional attribute\footnote{In the CBM literature, a \textit{concept} is defined as a binary variable indicating the presence or absence of a human-interpretable property (\eg, \textit{is\_red\_color}, \textit{is\_yellow\_color}), whereas we use a more general term \textit{attribute}, defined as a categorical variable that takes values from a finite set of such properties (\eg, \textit{color}).} distribution cannot, in general, be obtained using the standard CBM modules.
To address this challenge, inspired by the recent Neural Probabilistic Circuit (\npc)~\citep{npc, npc_workshop}, we propose the Causal Neural Probabilistic Circuit (\cnpc).
An \npc consists of a neural attribute predictor and a probabilistic circuit (\pc)~\citep{spn, pc_family, vergari2020probabilistic} that performs exact, tractable \textit{probabilistic inference} over attributes and the class label.
It predicts the class distribution by combining the predicted attribute distribution and the conditional distribution of the class label given attributes computed by the \pc.
We extend \npc to \cnpc by additionally compiling a causal graph over attributes and the class label into the \pc, yielding a causal \pc~\citep{causalpc} that supports exact, tractable \textit{probabilistic inference} and \textit{causal inference}.
When there is no intervention, \cnpc uses the same prediction form as \npc.
Under interventions on one or more attributes, we approximate the interventional attribute distribution using a Product of Experts (PoE)~\citep{poe} that combines the neural predictor’s attribute distribution with the interventional marginal attribute distribution computed by the causal \pc. \cnpc then predicts the class distribution by combining this PoE-based approximation with the conditional distribution of the class label given attributes computed by the causal \pc.

Our theoretical results show that both \npc and \cnpc exhibit compositional errors \wrt the neural predictor and the (causal) \pc under interventions;
moreover, under certain conditions, \cnpc matches the ground-truth interventional class distribution more closely than \npc.
To empirically evaluate \cnpc, we consider four settings: one benign (in-distribution) setting and three out-of-distribution (OOD) settings where the test-time input distribution deviates from the training distribution. The OOD settings include unseen data transformations~\citep{hendrycks2019benchmarking}, adversarial perturbations~\citep{goodfellow2015adversarial}, and spurious-correlation shifts~\citep{arjovsky2019invariant}. In these OOD settings, the neural predictor may perform unreliably, thus necessitating interventions.
Empirical results for \cnpc and five baseline models across five benchmark datasets show three main findings. 
First, task accuracy increases for all models as the number of intervened attributes grows, demonstrating the effectiveness of interventions. 
Second, in the benign setting, all models perform similarly, with \cnpc achieving a small advantage on most datasets.
Third, in the OOD settings, \cnpc substantially outperforms all baselines across different numbers of intervened attributes, indicating its high efficiency in utilizing interventions.

Our contributions are summarized as follows.
\textbf{(1)} We propose \cnpc, a model that combines a neural attribute predictor with a causal \pc compiled from the causal graph over attributes and the class label. \cnpc uses a PoE to approximate the interventional class distribution, which accounts for the propagation of attribute interventions.
\textbf{(2)} We provide theoretical results characterizing the compositional interventional errors of \npc and \cnpc, and show that \cnpc can model the ground-truth interventional class distribution more accurately than \npc under certain conditions.
\textbf{(3)} We empirically show that \cnpc achieves higher intervention efficiency compared with various baseline models, especially in OOD settings.

\section{Related work} \label{sec:related}
\paragraph{Modeling concept dependencies in CBMs.}
Traditional CBMs~\citep{cbm} predict each concept independently from the input, which neglects dependencies among concepts.
Modeling such dependencies can be useful, especially under intervention, as one can update the values of remaining concepts given an intervened concept.
A representative approach is SCBM~\citep{scbm}, which models values of all concepts with a multivariate normal distribution and trains two neural networks to predict the mean vector and covariance matrix from the input, respectively.
The interventional concept distribution is derived by conditioning the multivariate normal on the intervened concepts.
Similarly, \citet{leakage} leverage an auto-regressive model to capture concept dependencies.
%
Recent work \citep{c2bm, cgm} considers the causal dependencies among concepts and the class variable, explicitly incorporating a ground-truth or learned causal structure by arranging these variables as nodes of a causal graph and training neural networks to approximate the outcomes of the underlying structural equations. With the same goal, we instead compile the causal graph into a \pc, which performs exact and tractable causal inference whose outputs are guaranteed to respect the causal semantics of the graph by construction.

\paragraph{Intervention strategies.}
Due to the high cost of human interventions, the number of concepts that can be intervened on is often limited.
This naturally raises the question of which concepts to select under a fixed intervention budget.
Ideally, an intervention strategy should choose the concepts that yield the largest improvement in downstream task performance.
\citet{closer_look, sheth2022learning} study a range of selection criteria (\eg, prediction uncertainty) as well as different selection granularities (\eg, intervening on individual concepts versus concept groups).
They find that no single criterion is consistently best across settings, and that intervening on concept groups can be cost-effective when concepts within a group are mutually exclusive.
Henceforth, interventions are performed by default on concept groups, \ie, attributes.
Beyond these hand-crafted selection criteria, \citet{interactive, learn_interactive} learn policies that adaptively choose the next concept to intervene on.
With the causal structure explicitly incorporated, \citet{c2bm} select attributes in a top-down order along the causal hierarchy, as intervening on higher-level attributes can potentially influence more downstream attributes.
In this work, we follow them and select attributes for intervention by their depth in the causal graph.
However, it remains unclear whether this strategy is the most intervention-efficient for \cnpc. We leave it as a future direction and discuss it in Appendix~\ref{sec:discuss}.

\section{Preliminaries} \label{sec:prelim}

\subsection{Notation and assumptions} \label{subsec:assump}
Throughout the paper, random variables, their instantiations, and the instantiation sets are denoted by capital, lowercase, and calligraphic letters, respectively, \eg, $X, x, \mathcal{X}$.
CBMs predict the class label solely from the predicted attribute values, which relies on the following assumption:

\begin{assumption}[Sufficient attributes~\citep{npc}] \label{assump:sufficient}
    The class label $Y$ and the input $X$ are conditionally independent given the attributes $A_{1:K}$, \ie, $Y \perp X \mid A_{1:K}$.
\end{assumption}

In the case of insufficient attributes, one may leverage large language models (LLMs) or vision-language models (VLMs) for automated concept discovery and labeling~\citep{label_free_cbm, posthoc_cbm, vlg_cbm}, which remains an active research problem in the CBM literature. While this is complementary to our method, we assume that sufficient attributes are annotated, as our main focus is the incorporation of causal~dependencies.

We also assume access to the structure\footnote{We assume access only to the graph structure, while the specific causal mechanisms can remain unknown.} of a causal graph $\mathcal{C}$ over the semantically meaningful, observable variables, namely the attributes $A_{1:K}$ and the class label $Y$ in our setting. Each node in the graph represents a variable, and each directed edge $V_1 \to V_2$ indicates that $V_1$ is a causal parent (direct cause) of $V_2$. 
The availability of the graph structure is common in the causal inference literature~\citep{pearl1995causal, pearl2009causal, pearl2002causal} and is typically obtained from domain knowledge. When it is unknown, one may apply existing causal discovery algorithms~\citep{causal_discovery_pc, causal_discovery_hillclimb, causal_discovery_ges} to learn a plausible graph, which is complementary to our method.

We assume there are no unobserved confounders among the variables in $\mathcal{C}$. For a random variable $V \in \mathcal{C}$, we use $\mathrm{Pa}_{\mathcal{C}}(V)$ to denote the parent nodes of $V$ in $\mathcal{C}$ and $\mathrm{ND}_{\mathcal{C}}(V)$ to denote the non-descendants of $V$ in $\mathcal{C}$. In this work, we make the following structural assumption on $\mathcal{C}$:
\begin{assumption}[Structural assumption on the causal graph]
\label{assump:structure}
    $\mathrm{Pa}_{\mathcal{C}}(Y) \subseteq \{A_{1:K}\} \subseteq \mathrm{ND}_{\mathcal{C}}(Y)$.
\end{assumption}

This assumption states that every causal parent of $Y$ in $\mathcal{C}$ is an attribute, and every attribute is a non-descendant of $Y$. It is natural in CBM settings, as it allows attributes to influence the class~label, directly or indirectly through other attributes, while excluding causal influence in the reverse direction.

\subsection{Neural probabilistic circuits}
A Neural Probabilistic Circuit (\npc)~\citep{npc,chenunderstanding} consists of two modules. 
The first is an \textit{attribute predictor}, which is a neural network with $K$ classification heads and parameterized by $\theta$. It takes an input $X$ and predicts $K$ attribute distributions $\left\{ \pr_\theta(A_k\mid X) \right\}_{k=1}^K$.
The second is a \textit{probabilistic circuit} (\pc)~\citep{pc_family}, specifically a sum-product network~\citep{spn}, parameterized by $w$. It represents the joint distribution $\pr_w(Y, A_{1:K})$ while supporting the tractable computation of the conditional distribution $\pr_w(Y \mid A_{1:K})$. 
Leveraging these two modules, \npc predicts the class distribution as:
{\small
\begin{equation}
\label{eq:npc}
\begin{aligned}
\pr_{\theta, w}(Y\mid X) &= \sum_{a_{1:K}} \pr_{w}(Y \mid A_{1:K} = a_{1:K})\cdot\prod_{k=1}^K \pr_{\theta}(A_k=a_k \mid X).
\end{aligned}
\end{equation}}

Although not explicitly described in the original paper, the interventional distribution modeled by \npc can be inferred from the standard intervention procedure used in CBMs.
Consider an intervention on attribute $A_j$, denoted by $\doint_j$.
\npc would replace the $j$-th predicted attribute distribution with the ground-truth distribution $\pr_*(A_j\mid X)$\footnote{In practice, $\pr_*(A_j\mid X)$ is typically $\mathbb{I}(A_j=a_j^*(X))$ for some $a_j^*(X)\in\mathcal{A}_j$, which we adopt throughout; relaxing it to stochastic attribute labelings is a natural extension but would not change the qualitative claims of this paper.} while leaving the remaining distributions unchanged.
Consequently, \npc models the interventional class distribution under intervention on $A_j$ as:
{\small
\begin{equation}\label{eq:npc_int}
\begin{aligned}
&\pr^{\doint_j}_{\theta, w}(Y \mid X)
= \sum_{a_{1:K}} \pr_w(Y \mid A_{1:K}=a_{1:K}) \cdot \pr^{\doint_j}_\theta(A_{1:K}=a_{1:K} \mid X), \\
&\text{where }\pr^{\doint_j}_\theta(A_{1:K}=a_{1:K} \mid X)
= \pr_*(A_j=a_j \mid X) \cdot \prod_{k\neq j} \pr_\theta(A_k=a_k \mid X).
\end{aligned}
\end{equation}}
This formulation naturally extends to interventions on multiple attributes.
Similar to typical CBMs, \npc does not account for causal dependencies among attributes during intervention. One potential barrier is that standard \pc{}s do not support tractable causal inference.

\subsection{Causal probabilistic circuits} \label{sec:cpc}

Traditional approaches to interventional queries reduce the problem to exact inference on the mutilated graph induced by the intervention (the causal graph with edges into intervened variables removed~\citep{pearl1995causal}) with per-query complexity exponential in treewidth~\citep{dechter1999bucket, koller2009probabilistic, darwiche2009modeling}. In contrast, \citet{compile_graph1, causalpc} show that a causal graph can be compiled into a \pc that, once built, supports exact causal inference in time linear in circuit size.\footnote{We clarify that the circuit size itself can be exponential in the worst case, reflecting the inherent hardness of exact inference rather than a limitation of compilation.} 

In particular, they use the Variable Elimination (VE) algorithm~\citep{koller2009probabilistic} to schedule factor operations within the network polynomial~\citep{network_polynomial}, and construct an arithmetic circuit accordingly.
As an example, consider three variables with the graph structure $V_2 \leftarrow V_1 \rightarrow V_3$.
Each variable corresponds to a factor that represents its conditional probability distribution (CPD): $\phi_{V_1}(V_1)$, $\phi_{V_2}(V_1, V_2)$, and $\phi_{V_3}(V_1, V_3)$, whose symbolic entries are $\phi_{v_1}$, $\phi_{v_2\mid v_1}$, and $\phi_{v_3\mid v_1}$, respectively.
Each variable is also associated with an evidence factor, $\lambda_{V_i}(V_i)$, with entries given by indicator functions $\lambda_{v_i}=\mathbb{I}(V_i=v_i)$.
With these notations, the network polynomial is expressed as:
{\small
\begin{equation*}
\begin{aligned}
f(\bm{\phi}, \bm{\lambda}) &= \sum_{v_1, v_2, v_3} \phi_{v_1} \phi_{v_2\mid v_1} \phi_{v_3\mid v_1} \lambda_{v_1} \lambda_{v_2} \lambda_{v_3}.
\end{aligned}
\end{equation*}}

Given an elimination order, \eg, $[V_2, V_3, V_1]$, VE schedules the factor operations (multiplication and sum-out), yielding an equivalent expression that is more efficient to evaluate~\citep{dechter1999bucket, zhang1996causal}:
{\small
\begin{equation*}
\begin{aligned}
f(\bm{\phi}, \bm{\lambda}) &= \sum_{v_1} \phi_{v_1} \lambda_{v_1} \cdot \left( \sum_{v_3}  \phi_{v_3\mid v_1} \lambda_{v_3} \right) \cdot \left( \sum_{v_2} \phi_{v_2\mid v_1} \lambda_{v_2} \right).
\end{aligned}
\end{equation*}}
See the full derivation in Appendix~\ref{app:cpc}.

\begin{wrapfigure}{r}{0.65\textwidth}
    \centering
    \setlength{\abovecaptionskip}{4pt}
    \setlength{\belowcaptionskip}{-10pt}
    \includegraphics[width=\linewidth]{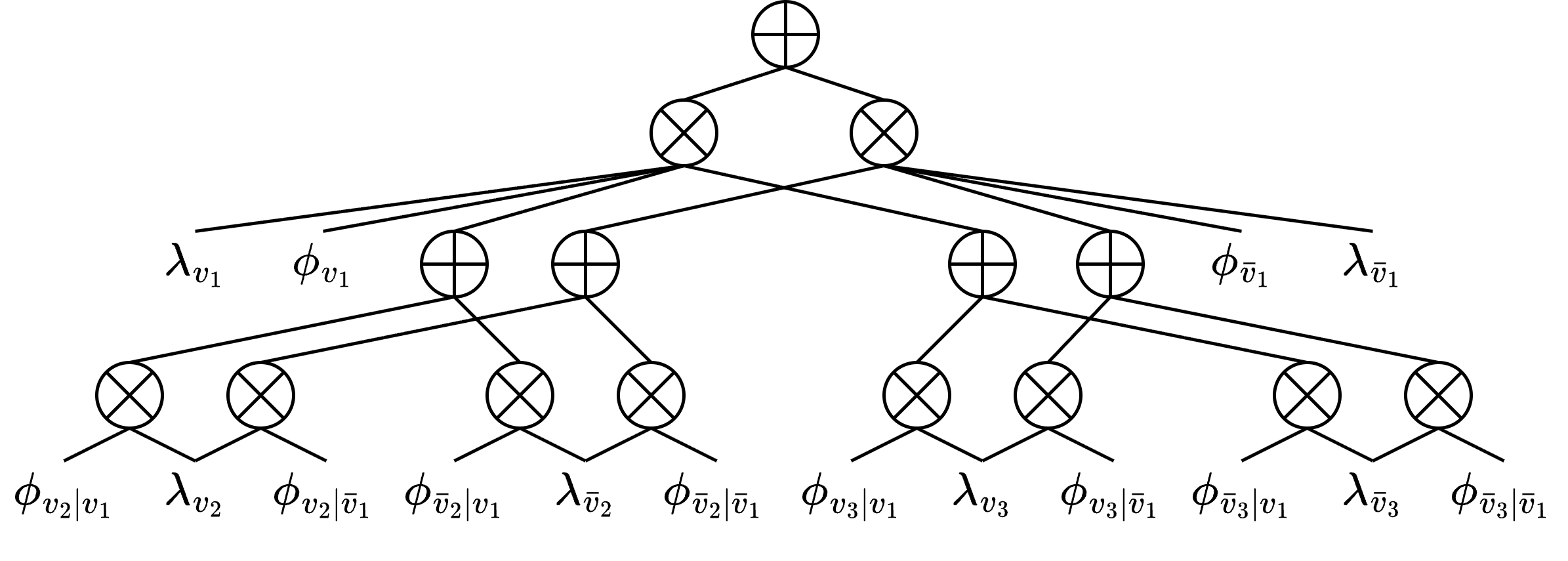}
    \caption{\small A causal \pc that compiles the causal graph $V_2 \leftarrow V_1 \rightarrow V_3$.}
    \label{fig:causal_pc}
\end{wrapfigure}


An arithmetic circuit is then constructed by establishing a sum node for the sum-out operator while establishing a product node for the multiplication operator.
For simplicity, suppose all variables are binary\footnote{The extension to the categorical ones
is standard.}, and we use $v_i$ and $\bar{v}_i$ to denote $V_i=1$ and $V_i=0$.
The resulting circuit is illustrated in Figure~\ref{fig:causal_pc}.
We refer to this circuit that compiles the causal graph as the \textit{causal \pc}.
For the values of those symbolic entries, one can use gradient descent or EM to seek maximum-likelihood estimates~\citep{causalpc}.

As the causal \pc encodes the network polynomial, it supports tractable probabilistic inference~\citep{network_polynomial}.
Computing the probability of any joint or marginal event, \eg, $\Pr(v_1, \bar{v}_2)$, only requires setting the indicators whose subscript is compatible with the event to $1$, \ie, $\lambda_{v_1}=1$, $\lambda_{\bar{v}_2}=1$, $\lambda_{v_3}=1$, $\lambda_{\bar{v}_3}=1$, and enabling one forward pass in \pc. Conditional inference, hence, requires two \pc forward passes.
With the compilation of the causal graph, the causal \pc also supports tractable causal inference.
Computing the interventional probability\footnote{The causal \pc also supports tractable inference for counterfactual queries. As it is out of the scope of this paper, we do not discuss the details here.}, \eg, $\Pr^{\doint(\bar{v}_2)}(v_1)$, can be done in two steps:
(i)~Replace the CPD entries of the intervened variable with $1$, \ie, $\phi_{v_2\mid v_1} = 1$, $\phi_{v_2\mid \bar{v}_1} = 1$, $\phi_{\bar{v}_2\mid v_1} = 1$, $\phi_{\bar{v}_2\mid \bar{v}_1} = 1$;
(ii)~Set the indicators compatible with $(v_1, \bar{v}_2)$ to $1$.
Following these two steps, the causal \pc outputs $\phi_{v_1}$, \ie, $\Pr(v_1)$, which is the expected result as $V_2$ has no causal effect on $V_1$.

\section{Method} \label{sec:method}

\subsection{Why is modeling the interventional class distribution hard} \label{sec:hardness}

Given an input $X$, the class distribution induced by the intervention $\doint_j$, which sets $A_j$ to a fixed value according to $\pr_*(A_j\mid X)$, can be written as:
{\small
\begin{equation}
\label{equ:trueint}
\begin{aligned}[b]
    \pr^{\doint_j}(Y\mid X) 
    &= \sum_{a_{1:K}} \pr(Y\mid A_{1:K}=a_{1:K}) \cdot \pr^{\doint_j}(A_{1:K}=a_{1:K}\mid X).
\end{aligned}
\end{equation}}
This expression follows from Assumption~\ref{assump:sufficient}, Assumption~\ref{assump:structure} together with the local Markov property, and the invariance of causal mechanisms. The full derivation is provided in Appendix~\ref{app:derivation}.


The first term in Eq.~\eqref{equ:trueint} can be evaluated exactly and tractably by a \pc or a causal \pc;
the second term, however, is not directly available from any module.
Using the Bayes' rule, we can rewrite it as:
{\small
\begin{equation*}
\begin{aligned}
    \pr^{\doint_j}(A_{1:K}=a_{1:K}\mid X) &\propto \pr^{\doint_j}(A_{1:K}=a_{1:K})\cdot \pr^{\doint_j}(X\mid A_{1:K}=a_{1:K}).
\end{aligned}
\end{equation*}}

In a simplistic setting where $\mathrm{Pa}_{\mathcal{C}}(X)=\{A_{1:K}\}$, $\pr^{\doint_j}(X\mid A_{1:K}) = \pr(X\mid A_{1:K})$ by invariance of causal mechanisms.
Consequently, $\pr^{\doint_j}(A_{1:K}=a_{1:K}\mid X)$ simplifies to:
{\small
\begin{equation*}
\begin{aligned}
\pr^{\doint_j}\!\left(A_{1:K}=a_{1:K}\mid X\right)
&\propto
\frac{\pr^{\doint_j}\!\left(A_{1:K}=a_{1:K}\right)}
     {\pr\!\left(A_{1:K}=a_{1:K}\right)}
\cdot
\pr\!\left(A_{1:K}=a_{1:K}\mid X\right).
\end{aligned}
\end{equation*}}
Here, the ratio term can be computed via causal and probabilistic inference in the causal \pc, and the remaining term by the attribute predictor. Hence, $\pr^{\doint_j}(Y\mid X)$ can be obtained from the two~modules.

Nevertheless, we typically do not know the causal relationships between the high-dimensional input $X$ and other variables, and $\pr^{\doint_j}(X\mid A_{1:K})$ does not coincide with $\pr(X\mid A_{1:K})$ in general.
This mismatch can arise, for example, if $X$ has additional (possibly unobserved) causal parents beyond $A_{1:K}$, or if some attributes in $A_{1:K}$ are descendants of $X$.
Therefore, modeling $\pr^{\doint_j}(X\mid A_{1:K})$ would require learning an interventional generative model for $X$, which is challenging, as many powerful generative models (\eg, diffusion models~\citep{ddpm} and GANs~\citep{gan}) do not allow access to the generative probabilities but only sampling, and we don't have access to $X$'s interventional samples.

\subsection{Intervention propagation: causal neural probabilistic circuits} \label{sec:cnpc}
In this section, we propose the Causal Neural Probabilistic Circuit (\cnpc), which consists of an \textit{attribute predictor} and a \textit{causal \pc}; see the top-left and bottom modules in Figure~\ref{fig:framework}. 
Similar to \npc, the attribute predictor is a multi-head neural network that predicts the distributions over the $K$ attributes from the input, and is trained with standard supervised learning. 
The causal \pc is obtained by compiling the causal structure over the attributes and the class label, and its parameters are given by the maximum-likelihood estimates. 

Compared with \npc, the main architectural difference is that \cnpc replaces the \pc with a causal \pc. However, this replacement alone is not sufficient to induce an interventional class distribution that accounts for causal dependencies. We still need to address the issue described in Section~\ref{sec:hardness}, namely modeling $\pr^{\doint_j}\!\left(A_{1:K}\mid X\right)$. Since the exact interventional conditional $\pr^{do_j}(A_{1:K}\mid X)$ is unavailable, we approximate it by combining two tractable distributions that capture complementary information: \textbf{(i)}~the clamped predictive distribution $\pr_{\theta}^{do_j}(A_{1:K}\mid X)$ from the neural predictor, which preserves evidence from the input but does not causally propagate the intervention; and \textbf{(ii)}~the interventional marginal $\pr_{w}^{do_j}(A_{1:K})$ from the causal \pc, which correctly propagates the intervention through the causal structure but is not conditioned on the input.
To combine the two, we adopt the reverse KL-barycenter approximation~\citep{genest1986combining,heskes1998selecting}, defined as the solution to the variational problem of minimizing a weighted sum of reverse KL divergences from each source:
{
\small
\begin{equation*}
    \tilde{\pr}_{\theta,w,\alpha}^{\doint_j}(\cdot \mid X)
:=
\arg\min_{q \in \Delta(\mathcal{A}_{1:K})}
(1-\alpha)\,
\KL\!\left(
q \,\middle\|\, \pr_{\theta}^{\doint_j}(\cdot \mid X)
\right)
+
\alpha\,
\KL\!\left(
q \,\middle\|\, \pr_{w}^{\doint_j}(\cdot)
\right),
\end{equation*}
}
where $\Delta(\mathcal{A}_{1:K})$ denotes the probability simplex over the joint attribute space, $\alpha \in [0,1]$ controls the relative influence of the two sources. 
The unique minimizer is the normalized geometric mean, also known as a product of experts (PoE), given as follows: 
{\small
\begin{equation}
\label{eq:cnpc_attr}
\begin{aligned}
\tilde{\pr}^{\doint_j}_{\theta,w,\alpha}( A_{1:K}=a_{1:K} \mid X )
&= \frac{\left( \pr^{\doint_j}_\theta( A_{1:K}=a_{1:K} \mid X ) \right)^{1-\alpha} \left( \pr^{\doint_j}_w( A_{1:K}=a_{1:K} ) \right)^{\alpha}}{Z_\alpha(X)},
\end{aligned}
\end{equation}}
where $Z_\alpha(X)$ is the normalizing constant.\footnote{The extension to interventions on multiple attributes is straightforward, as we only need to clamp the intervened attributes to their ground-truth distributions and compute the corresponding interventional marginal using the causal \pc.} 
With this PoE approximation, \cnpc predicts the interventional class distribution as:
{\small
\begin{equation}
\label{eq:cnpc_int}
\begin{aligned}
\tilde{\pr}^{\doint_j}_{\theta,w,\alpha}( Y \mid X )
&= \sum_{a_{1:K}} \pr_w(Y \mid A_{1:K}=a_{1:K}) \cdot \tilde{\pr}^{\doint_j}_{\theta,w,\alpha}( A_{1:K}=a_{1:K} \mid X ).
\end{aligned}
\end{equation}
}
Finally, \cnpc predicts the attribute labels as $\hat{a}_{1:K} = \arg\max_{a_{1:K} \in \mathcal{A}_{1:K}} \tilde{\mathbb{P}}_{\theta,w,\alpha}^{\mathrm{do}_j}(A_{1:K}=a_{1:K}\mid X)$ and the class label as $\hat{y} = \arg\max_{y\in\mathcal{Y}} \tilde{\mathbb{P}}_{\theta,w,\alpha}^{\mathrm{do}_j}(Y=y\mid X)$.
When there is no intervention, \cnpc~predicts the class distribution in the same way as \npc, \ie, via Eq.~\eqref{eq:npc}. 
\cnpc reduces to \npc when $\alpha=0$.
Overall, the PoE approximation allows \cnpc to preserve the evidence from the input through the neural predictor, while accounting for causal dependencies through the causal \pc, whose interventional marginal inherently captures the effects of intervening on one attribute on the others.

\section{Theoretical analysis} \label{sec:theory}

In this section, we present theoretical results to understand the behavior of \npc and \cnpc.
First, analogous to~\citet[Theorem~2]{npc}, \Cref{thm:benign_comp} in Appendix~\ref{app:theory} shows that, in the absence of interventions, the prediction errors of both \npc and \cnpc admit a compositional bound in terms of the errors of the attribute predictor and the \pc. As a result, improving either module improves the overall predictive distribution.
Corollaries~\ref{thm:int_comp_npc} and \ref{thm:int_comp_cnpc} further show that a similar compositional error decomposition still holds under interventions.

\begin{corollary} \label{thm:int_comp_npc}
The expected interventional error of \npc is upper-bounded by the sum of the expected interventional error of the attribute predictor and the expected prediction error of the \pc. Specifically,
{
\small
\begin{equation*}
\begin{aligned}
\bE_{X\sim\pr_*^{\doint_j}} \left[ \KL\left( \pr_*^{\doint_j}(Y\mid X) \|\ \pr^{\doint_j}_{\theta,w}(Y\mid X) \right) \right] 
&\le
\bE_{X\sim\pr_*^{\doint_j}} \left[ \KL\left( \pr_*^{\doint_j}(A_{1:K} \mid X) \|\ \pr^{\doint_j}_{\theta}(A_{1:K} \mid X) \right) \right] \\
&+ \bE_{A_{1:K}\sim\pr_*^{\doint_j}} \left[ \KL\left( \pr_*(Y\mid A_{1:K}) \|\ \pr_{w}(Y\mid A_{1:K}) \right) \right]. 
\end{aligned}
\end{equation*}
}
Equality holds if for each $(x,y)$, there exists a constant $c(x,y)$ s.t. $\frac{\pr_*^{\doint_j}(a_{1:K}\mid x)\pr_*(y\mid a_{1:K})}{\pr_{\theta}^{\doint_j}(a_{1:K}\mid x)\pr_{w}(y\mid a_{1:K})} = c(x,y)$ for all $a_{1:K}$.
\end{corollary}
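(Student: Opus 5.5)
The plan is to use the KL chain rule together with the log-sum inequality, in the same way as the benign compositional bound (\Cref{thm:benign_comp}), but with the interventional quantities in place of the observational ones.

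\textbf{Step 1: write both sides as mixtures over attributes.} Fix $x$. By Eq.~\eqref{equ:trueint}, the true interventional class distribution is
\[
\pr_*^{\doint_j}(y\mid x)=\sum_{a_{1:K}} \pr_*(y\mid a_{1:K})\,\pr_*^{\doint_j}(a_{1:K}\mid x).
\]
This uses that $\pr_*(Y\mid A_{1:K})$ is invariant under $\doint_j$, which follows from Assumptions~\ref{assump:sufficient} and~\ref{assump:structure} as derived in Appendix~\ref{app:derivation}. By Eq.~\eqref{eq:npc_int}, \npc has the same form:
\[
\pr_{\theta,w}^{\doint_j}(y\mid x)=\sum_{a_{1:K}} \pr_w(y\mid a_{1:K})\,\pr_\theta^{\doint_j}(a_{1:K}\mid x).
\]

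\textbf{Step 2: apply the log-sum inequality for each fixed $(x,y)$.} Set $p_a=\pr_*^{\doint_j}(a\mid x)\pr_*(y\mid a)$ and $q_a=\pr_\theta^{\doint_j}(a\mid x)\pr_w(y\mid a)$. The log-sum inequality gives
\[
\Big(\sum_a p_a\Big)\log\frac{\sum_a p_a}{\sum_a q_a}\le\sum_a p_a\log\frac{p_a}{q_a}.
\]
Equality holds exactly when $p_a/q_a$ does not depend on $a$, i.e.\ equals some $c(x,y)$. This is precisely the stated equality condition.

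\textbf{Step 3: sum over $y$ and split the logarithm.} Summing Step 2 over $y$ gives
\[
\KL\bigl(\pr_*^{\doint_j}(Y\mid x)\,\|\,\pr_{\theta,w}^{\doint_j}(Y\mid x)\bigr)\le \KL\bigl(\pr_*^{\doint_j}(A,Y\mid x)\,\|\,\pr_\theta^{\doint_j}(A\mid x)\pr_w(Y\mid A)\bigr).
\]
Expanding $\log(p_a/q_a)$ into its attribute part and its label part, and summing out $y$ in the attribute part, the right-hand side equals
\[
\KL\bigl(\pr_*^{\doint_j}(A\mid x)\,\|\,\pr_\theta^{\doint_j}(A\mid x)\bigr)+\sum_a \pr_*^{\doint_j}(a\mid x)\,\KL\bigl(\pr_*(Y\mid a)\,\|\,\pr_w(Y\mid a)\bigr).
\]

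\textbf{Step 4: take expectations over $X\sim\pr_*^{\doint_j}$.} The first term becomes the stated attribute-predictor error. In the second term, $\bE_X\bigl[\pr_*^{\doint_j}(a\mid X)\bigr]=\pr_*^{\doint_j}(a)$, so it becomes $\bE_{A\sim\pr_*^{\doint_j}}\bigl[\KL(\pr_*(Y\mid A)\,\|\,\pr_w(Y\mid A))\bigr]$, the stated \pc error.

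\textbf{Main obstacle.} The only delicate step is justifying the invariance $\pr_*^{\doint_j}(Y\mid A_{1:K})=\pr_*(Y\mid A_{1:K})$ used in Step 1, and hence in the \pc term. For this I would invoke the appendix derivation of Eq.~\eqref{equ:trueint}.

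Measure-theoretic edge cases (zero denominators) are handled by the usual conventions: $0\log 0=0$, and a KL divergence is infinite whenever absolute continuity fails, in which case the bound holds trivially.
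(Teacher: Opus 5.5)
Your proposal is correct and is essentially the paper's proof: a per-$(x,y)$ log-sum inequality over $a_{1:K}$, then splitting the logarithm into an attribute term and a label term, then averaging over $X\sim\pr_*^{\doint_j}$. The differences are cosmetic: you work conditionally on $x$ and invoke the invariance $\pr_*^{\doint_j}(Y\mid A_{1:K})=\pr_*(Y\mid A_{1:K})$ up front via Eq.~\eqref{equ:trueint}, whereas the paper phrases the same steps through joint KLs over $(X,Y)$ and $(X,A_{1:K},Y)$ and invokes invariance at the decomposition step (where it also asserts an analogous invariance for $\pr_w$, which your route avoids by using Eq.~\eqref{eq:npc_int} directly).
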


\begin{corollary} \label{thm:int_comp_cnpc}
The expected interventional error of \cnpc is upper-bounded by a weighted sum of (i) the expected interventional error of the attribute predictor, (ii) the expected interventional error of the causal \pc over attributes, and (iii) the expected prediction error of the causal \pc. Specifically,
{
\small
\begin{equation*}
\resizebox{\linewidth}{!}{
$
\begin{aligned}
\bE_{X\sim\pr_*^{\doint_j}} \!\left[ \KL\!\left( \pr_*^{\doint_j}(Y\mid X)\ \|\ \tilde{\pr}^{\doint_j}_{\theta,w,\alpha}(Y\mid X) \right) \right] 
&\le
(1-\alpha)\ \bE_{X\sim\pr_*^{\doint_j}} \!\left[ \KL \!\left( \pr_*^{\doint_j}(A_{1:K}\mid X)\ \|\ \pr^{\doint_j}_{\theta}(A_{1:K}\mid X) \right) \right] \\
&+ \alpha\ \bE_{X\sim\pr_*^{\doint_j}} \!\left[ \KL \!\left( \pr_*^{\doint_j}(A_{1:K}\mid X)\ \|\ \pr^{\doint_j}_{w}(A_{1:K}) \right) \right] \\
&+ \bE_{A_{1:K}\sim\pr_*^{\doint_j}} \left[ \KL\left( \pr_*(Y\mid A_{1:K}) \|\ \pr_{w}(Y\mid A_{1:K}) \right) \right] .
\end{aligned}
$
}
\end{equation*}
}
Equality holds if for each $(x,y)$, $\pr^{\doint_j}_\theta(a_{1:K} \mid x)=\pr^{\doint_j}_w(a_{1:K})$ for all $a_{1:K}$, and there exists a constant $c(x,y)$ s.t. $\frac{\pr_*^{\doint_j}(a_{1:K}\mid x)\pr_*(y\mid a_{1:K})}{\pr_{\theta}^{\doint_j}(a_{1:K}\mid x)\pr_{w}(y\mid a_{1:K})} = c(x,y)$ for all $a_{1:K}$.
\end{corollary}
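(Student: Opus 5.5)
The plan is to reduce to the same compositional argument used for Corollary~\ref{thm:int_comp_npc}, applied with the PoE distribution $\tilde{\pr}^{\doint_j}_{\theta,w,\alpha}(A_{1:K}\mid X)$ in place of $\pr^{\doint_j}_\theta(A_{1:K}\mid X)$. One extra step then handles the PoE term.

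\textbf{Step 1 (compositional bound).} By Eq.~\eqref{equ:trueint}, for each $x$ we have $\pr_*^{\doint_j}(Y\mid x)=\sum_{a}\pr_*(Y\mid a)\pr_*^{\doint_j}(a\mid x)$. By Eq.~\eqref{eq:cnpc_int}, $\tilde{\pr}^{\doint_j}_{\theta,w,\alpha}(Y\mid x)=\sum_a \pr_w(Y\mid a)\tilde{\pr}^{\doint_j}_{\theta,w,\alpha}(a\mid x)$. Both are marginals over $Y$ of the joints $P(a,y)=\pr_*^{\doint_j}(a\mid x)\pr_*(y\mid a)$ and $Q(a,y)=\tilde{\pr}(a\mid x)\pr_w(y\mid a)$. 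The chain rule together with monotonicity of KL under marginalization (equivalently, the log-sum inequality applied for each $y$) gives
\[
\KL\!\left(\pr_*^{\doint_j}(Y\mid x)\,\|\,\tilde{\pr}(Y\mid x)\right)\le \KL\!\left(\pr_*^{\doint_j}(A_{1:K}\mid x)\,\|\,\tilde{\pr}(A_{1:K}\mid x)\right)+\sum_a \pr_*^{\doint_j}(a\mid x)\,\KL\!\left(\pr_*(Y\mid a)\,\|\,\pr_w(Y\mid a)\right).
\]
Taking the expectation over $X\sim\pr_*^{\doint_j}$ turns the second term into the expectation over $A_{1:K}\sim\pr_*^{\doint_j}$. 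Equality in the log-sum step holds when $P(a,y)/Q(a,y)$ is constant in $a$ for each $(x,y)$.

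\textbf{Step 2 (PoE term).} Let $p=\pr_*^{\doint_j}(\cdot\mid x)$, $q_1=\pr_\theta^{\doint_j}(\cdot\mid x)$, $q_2=\pr_w^{\doint_j}(\cdot)$ and $\tilde q=q_1^{1-\alpha}q_2^{\alpha}/Z_\alpha$. Expanding the logarithm gives
\[
\KL(p\|\tilde q)=(1-\alpha)\KL(p\|q_1)+\alpha\KL(p\|q_2)+\log Z_\alpha(x).
\]
By H\"older's inequality (weighted AM--GM), $Z_\alpha=\sum_a q_1^{1-\alpha}q_2^\alpha\le \big(\sum q_1\big)^{1-\alpha}\big(\sum q_2\big)^\alpha=1$, so $\log Z_\alpha\le 0$. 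Substituting this into Step~1 and taking expectations yields the stated bound.

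\textbf{Equality.} Equality in H\"older's inequality requires $q_1\propto q_2$; since both are normalized, this means $q_1=q_2$. In that case $\tilde q=q_1=\pr_\theta^{\doint_j}$, so the Step~1 equality condition becomes exactly the constant-ratio condition with $\pr_\theta^{\doint_j}$ in the denominator, matching the statement.

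The main obstacle is Step~2, specifically checking the sign of $\log Z_\alpha$ and handling the supports. If $q_1$ or $q_2$ vanishes where $p>0$, the corresponding right-hand KL term is infinite and the bound holds trivially. For $\alpha\in\{0,1\}$, $Z_\alpha=1$ directly, using the convention $0^0=1$ on the support.
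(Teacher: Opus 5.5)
Your proposal is correct and follows essentially the same route as the paper. It uses the log-sum/chain-rule compositional bound with $\tilde{\pr}^{\doint_j}_{\theta,w,\alpha}(A_{1:K}\mid X)$ as the attribute distribution, then the decomposition $\KL(p\|\tilde q)=(1-\alpha)\KL(p\|q_1)+\alpha\KL(p\|q_2)+\log Z_\alpha$, then H\"older's inequality to get $\log Z_\alpha\le 0$, and the same sufficiency argument for equality; the only cosmetic difference is that you apply the log-sum step pointwise in $x$ and then average, whereas the paper first rewrites the expected conditional KL as a joint KL over $(X,Y)$.
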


Both bounds inherit the compositional structure of the observational case.
Let $\mathbb{B}_{\npc}$ and $\mathbb{B}_{\cnpc}$ denote the interventional error bounds in Corollaries~\ref{thm:int_comp_npc} and~\ref{thm:int_comp_cnpc}, respectively. 
It follows directly that if
{\small
\begin{equation*}
\begin{aligned}
\KL \!\left( \pr_*^{\doint_j}(A_{1:K}\mid X)\ \|\ \pr^{\doint_j}_{w}(A_{1:K}) \right)
&\le
\KL \!\left( \pr_*^{\doint_j}(A_{1:K}\mid X)\ \|\ \pr^{\doint_j}_{\theta}(A_{1:K}\mid X) \right),
\end{aligned}
\end{equation*}}
then $\mathbb{B}_{\cnpc} \le \mathbb{B}_{\npc}$.
This result implies that, when the attribute predictor is unreliable, or the causal \pc's interventional marginal is close in KL to the ground-truth interventional attribute distribution, \cnpc might perform better than \npc.
In summary, our analysis establishes a compositional error bound for both models and identifies the conditions under which \cnpc improves upon \npc.



\section{Experiments} \label{sec:exp}
\subsection{Experimental settings}\label{sec:exp_setup}
\paragraph{Datasets.}
We evaluate on five datasets spanning diverse domains. All datasets are split 80:10:10 into training, validation, and test sets.
\textbf{(1) Asia}~\citep{asia} is a Bayesian network for medical diagnosis from bnlearn~\citep{bnlearn}. We sample 10{,}000 data points, taking the ``dysp'' node as the class variable and the remaining nodes as attributes. Following~\citet{c2bm}, we train an autoencoder to map attribute labels into an embedding, which serves as the input to models.
\textbf{(2) Sachs}~\citep{sachs} is a Bayesian network for biological signaling from bnlearn, processed identically to Asia. Since neither dataset contains images, we use them only in the benign setting.
\textbf{(3) MNISTAdd}~\citep{deepproblog}: Each instance pairs two digit images from MNIST~\citep{mnist}, with the digit values as attribute labels and their sum as the class label. We generate 50{,}000 instances by sampling labels from a predefined causal Bayesian network and then concatenating randomly sampled images.
\textbf{(4) cMNISTAdd}~\citep{cmnist} extends MNISTAdd with spurious correlations: digits 0--9 are paired with specific colors in training and validation, and this mapping is reversed at test time. We use it exclusively for OOD evaluation.
\textbf{(5) CelebA}~\citep{celeba} contains 202{,}599 celebrity face images with 40 imbalanced binary attributes. We use the five most balanced attributes and define the class label as a unique combination of attribute labels (yielding~32 classes), and manually annotate the underlying causal graph.
For MNISTAdd and CelebA, we construct two OOD variants by applying 180-degree rotations and PGD attacks~\citep{pgd_attack} to the standard test set.

\paragraph{Baselines.}
We select five representative CBMs as our baselines: 
\textbf{(1)~Vanilla CBM}~\citep{cbm} consists of a standard neural concept predictor followed by a linear label predictor. 
\textbf{(2)~CEM}~\citep{cem} utilizes high-dimensional concept embeddings rather than scalar concept predictions for its label predictor to enhance downstream task performance. 
\textbf{(3)~SCBM}~\citep{scbm} models the joint distribution of all concepts using a multivariate normal distribution and treats interventions as conditional distributions. 
\textbf{(4)~C$^2$BM}~\citep{c2bm} integrates a causal graph into the concept predictor, allowing interventions to propagate through the graph structure. 
\textbf{(5)~\npc}~\citep{npc} employs a \pc as the label predictor to exactly encode the probabilistic dependencies among attributes and the class label.

\paragraph{Evaluation metrics.}
We adopt two standard evaluation metrics from the CBM literature: \textit{task accuracy}, the model's accuracy in predicting the class label, and \textit{(mean) attribute accuracy}, the model's accuracy in predicting each attribute label, averaged across attributes.

\paragraph{Implementation.}
For a fair evaluation, \cnpc and all baseline models employ Multi-Layer Perceptrons (MLPs) as their neural predictors. The attribute intervention order is determined by node depth in the causal graph, \ie, the longest directed path from the root nodes. 
For \cnpc, the PoE weight $\alpha$ is selected on the in-distribution validation set in the benign setting; in the OOD setting, since OOD validation data is by assumption unavailable, we set $\alpha = 0.9$ across all datasets and all intervened attributes, placing more weight on evidence from the causal \pc.
An ablation study analyzing the effect of varying $\alpha$ is presented in Section~\ref{sec:ablation}.
More implementation details are provided in Appendix~\ref{app:exp_setup}.

\subsection{Quantitative results}
We evaluate models in the benign setting (Section~\ref{sec:benign}) and three distinct OOD settings (Section~\ref{sec:ood}): unseen data transformations, adversarial perturbations, and spurious-correlation shifts.

\subsubsection{Performance in the benign setting} \label{sec:benign}

\begin{figure*}[tb]
    \centering
    \setlength{\abovecaptionskip}{4pt}
    \setlength{\belowcaptionskip}{-8pt}
    \includegraphics[width=1\linewidth]{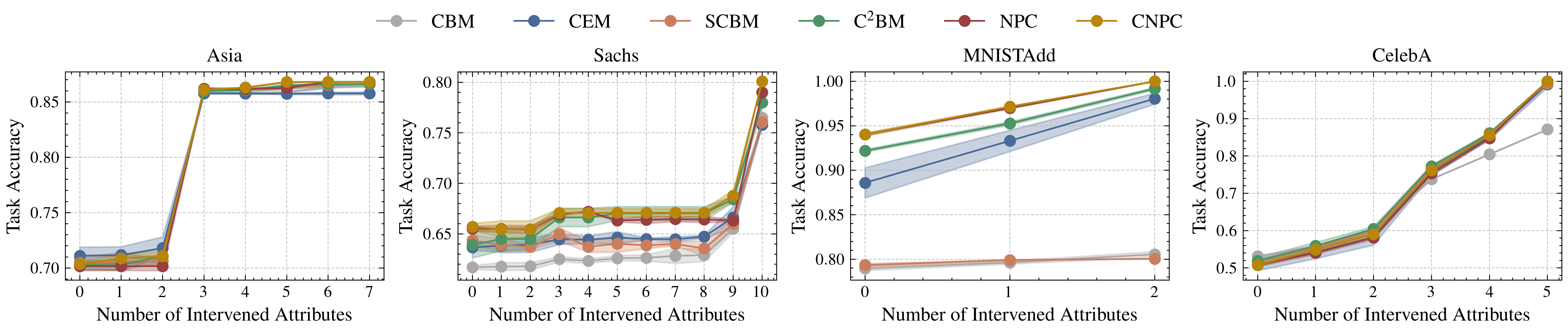}
    \caption{\small Task accuracy of all models in the benign setting on the Asia, Sachs, MNISTAdd, and CelebA datasets under varying numbers of attribute interventions. All results are averaged across three random seeds.}
    \label{fig:benign_task}
\end{figure*}

\begin{figure*}[tb]
    \centering
    \setlength{\abovecaptionskip}{4pt}
    \setlength{\belowcaptionskip}{-10pt}
    \includegraphics[width=1\linewidth]{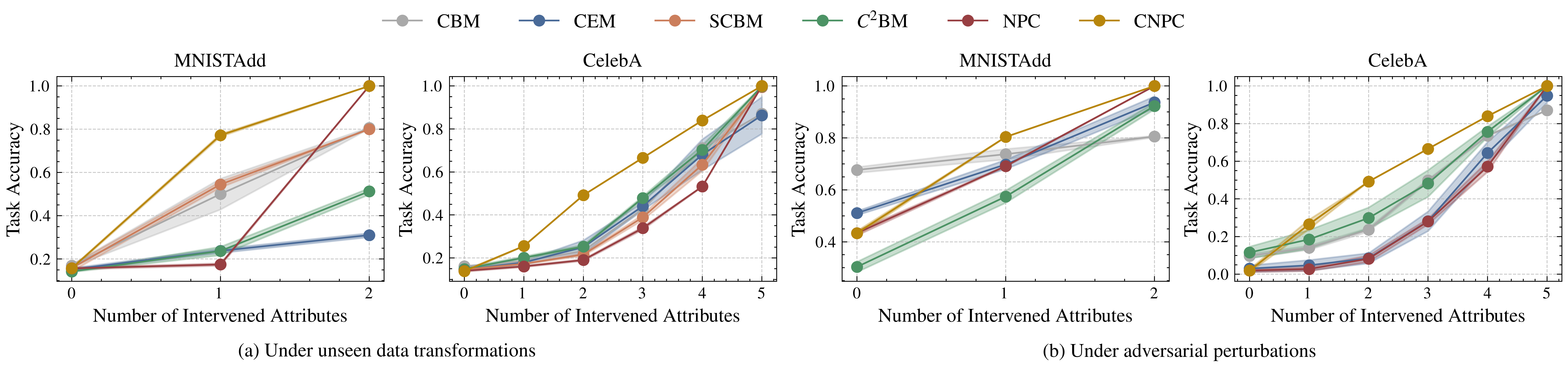}
    \caption{\small Task accuracy of all models in OOD settings on the MNISTAdd and CelebA datasets under varying numbers of attribute interventions. All results are averaged across three random seeds.}
    \label{fig:ood_task}
\end{figure*}

Figure~\ref{fig:benign_task} shows the task accuracy of all models in the benign setting on the Asia, Sachs, MNISTAdd, and CelebA datasets under varying numbers of attribute interventions. Results for attribute accuracy are shown in Figure~\ref{fig:benign_attr} in Appendix~\ref{app:benign}. 
Across all four datasets, every model shows an upward trend in task accuracy as the number of intervened attributes increases. This consistency demonstrates that interventions can reliably correct model predictions in concept bottleneck models, highlighting the value of the human-model collaboration these models enable. 

On MNISTAdd, \cnpc attains the highest task accuracy at every intervention count, while on the remaining datasets, all models perform comparably, with \cnpc marginally leading on Asia and Sachs and C$^2$BM on CelebA. Together, these results suggest that the models which incorporate the underlying causal structure, whether by compiling into a \pc or through neural approximations of structural equations, can improve intervention efficiency. Hence, they underscore the importance of modeling causal dependencies in concept bottleneck models.

\begin{figure*}[t]
    \centering
    \setlength{\abovecaptionskip}{2pt}
    \setlength{\belowcaptionskip}{-12pt}
    \includegraphics[width=0.85\linewidth]{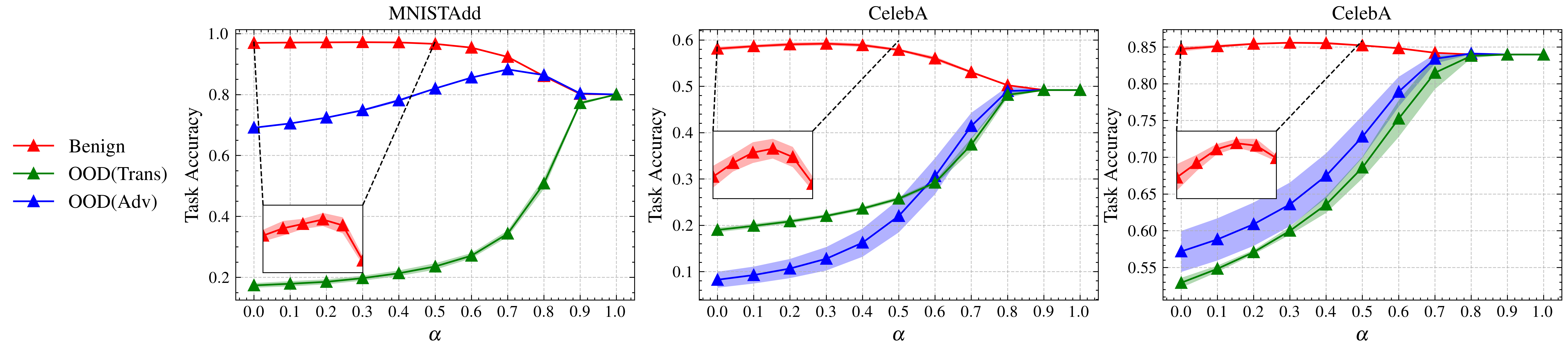}
    \caption{\small Task accuracy of \cnpc in both benign and OOD settings with $\alpha$ varying from 0.0 to 1.0 in increments of 0.1. \textbf{Left:} Performance on MNISTAdd with one intervened attribute. \textbf{Middle/Right:} Performance on CelebA with two/four intervened attributes. All results are averaged across three random seeds.
    }
    \label{fig:alpha}
\end{figure*}

\subsubsection{Performance in the OOD settings} \label{sec:ood}
\paragraph{Unseen data transformations.}
Figure~\ref{fig:ood_task} (a) reports the task accuracy on MNISTAdd and CelebA under unseen data transformations across varying numbers of attribute interventions; the corresponding attribute accuracy is shown in Figure~\ref{fig:ood_attr}~(a) in Appendix~\ref{app:ood}.
These transformations severely degrade attribute accuracy across all models: before any intervention, accuracy drops from over 92\% to below 20\% on MNISTAdd and from over 85\% to roughly 50\% on CelebA. Interventions therefore become crucial for recovering task performance. 
On both datasets, \cnpc consistently outperforms all baselines by a substantial margin across all intervention counts. For example, it surpasses the second-best model by 23\% on MNISTAdd with one intervened attribute and by 24\% on CelebA with two. These results highlight \cnpc's strong intervention efficiency under unseen transformations.

\paragraph{Adversarial perturbations.}
Figure~\ref{fig:ood_task} (b) reports task accuracy on MNISTAdd and CelebA across varying numbers of attribute interventions under adversarial perturbations that corrupt the neural predictor's outputs for all attributes; the attribute accuracy is shown in Figure~\ref{fig:ood_attr} (b) in Appendix~\ref{app:ood}. 
We exclude SCBM from this experiment because it relies on non-differentiable discrete sampling for concept predictions and is therefore incompatible with the gradient-based PGD attack. 
As with unseen data transformations, adversarial perturbations substantially degrade attribute accuracy across all models. 
We find that, although \cnpc starts from a relatively low task accuracy before intervention, it capitalizes on interventions effectively and ultimately attains the highest task accuracy at every intervention count, which further corroborates its strong efficiency in utilizing interventions.

\begin{wrapfigure}{r}{0.50\textwidth}
    \vspace{-8pt}
    \centering
    \setlength{\belowcaptionskip}{-10pt}
    \setlength{\abovecaptionskip}{2pt}
    \begin{subfigure}[b]{0.23\textwidth}
        \centering
        \includegraphics[width=\textwidth]{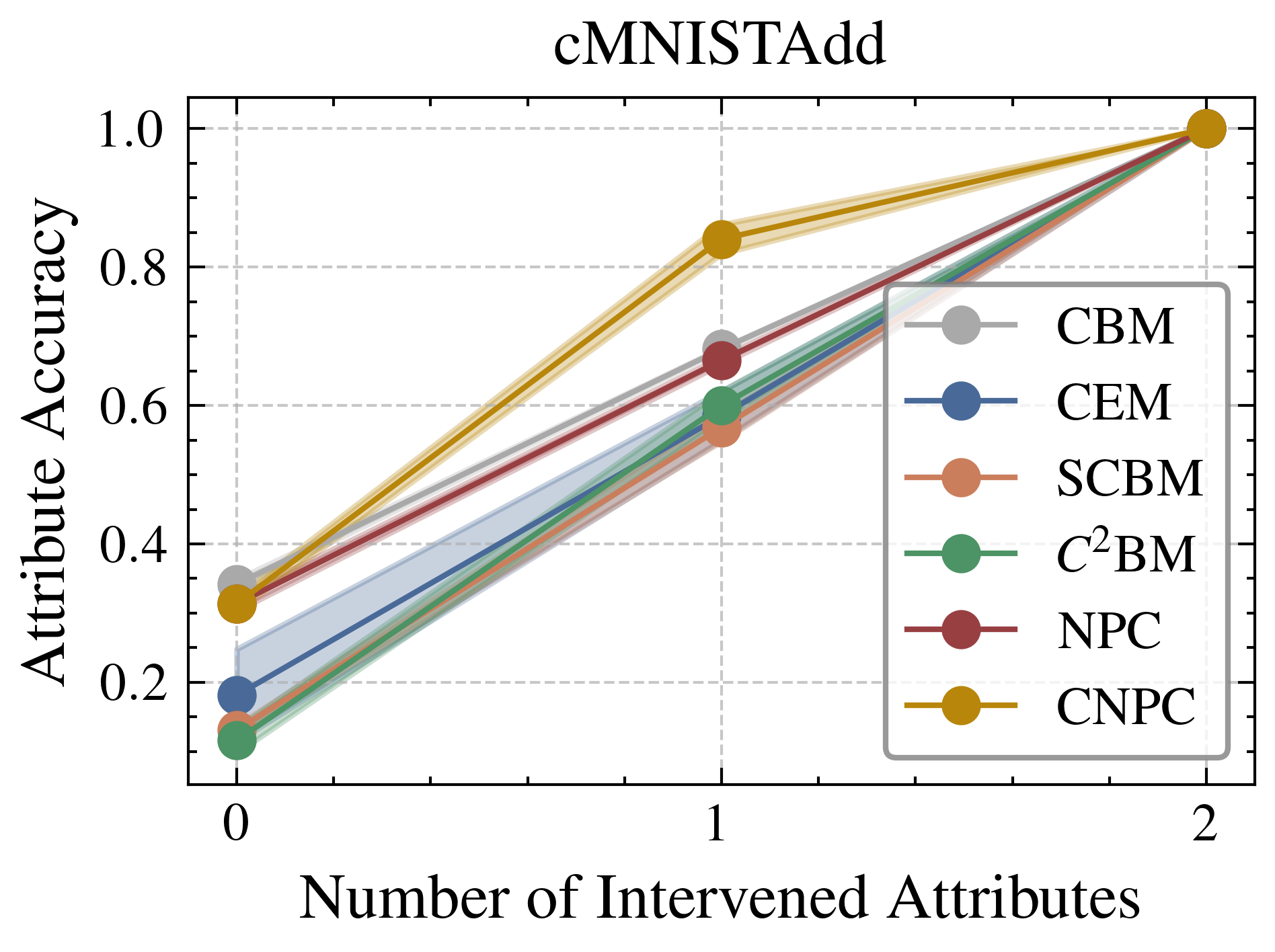}
    \end{subfigure}
    \begin{subfigure}[b]{0.23\textwidth}
        \centering
        \includegraphics[width=\textwidth]{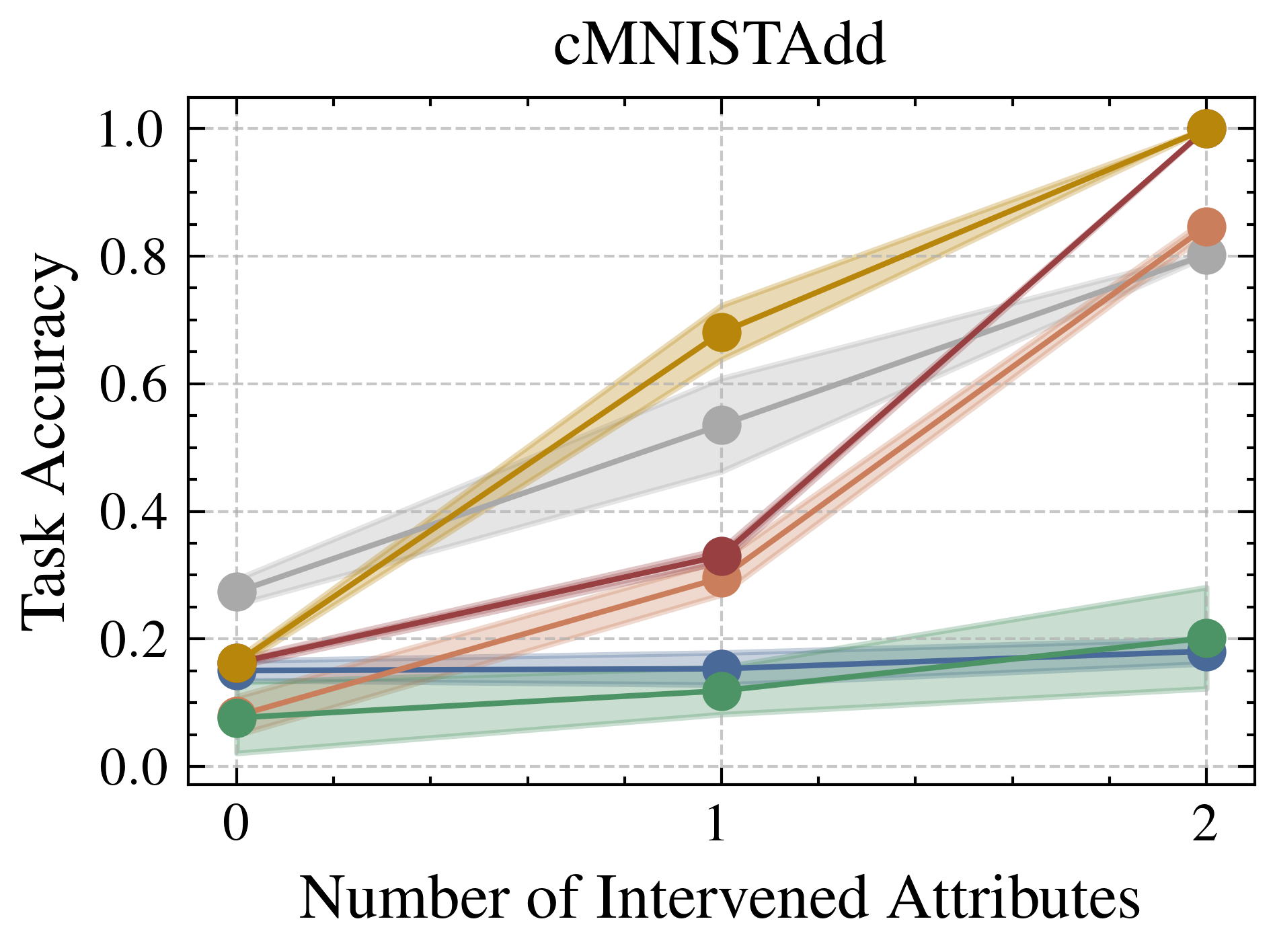}
    \end{subfigure}
    \caption{\small Attribute accuracy (\textbf{left}) and task accuracy (\textbf{right}) of all models on cMNISTAdd. 
    }
    \label{fig:cmnist_acc}
\end{wrapfigure}

\paragraph{Spurious-correlation shifts.}
Figure~\ref{fig:cmnist_acc} reports performance on cMNISTAdd, where digit values are spuriously correlated with colors. As with the previous shifts, spurious correlations severely degrade the attribute accuracy before intervention across all models, dropping it from over 92\% to below 40\%. Nonetheless, \cnpc effectively uses interventions, achieving the highest attribute and task accuracy after intervention.


\subsection{Ablation studies} \label{sec:ablation}
\paragraph{Effect of $\alpha$.}


To systematically examine the effect of $\alpha$ on \cnpc's performance, we report in Figure~\ref{fig:alpha} \cnpc's task accuracy across varying $\alpha$ in three scenarios: MNISTAdd with one intervened attribute, and CelebA with two and four intervened attributes.
In the benign setting, task accuracy first rises and then falls as $\alpha$ grows, peaking at $\alpha=0.3$ in all three scenarios. Notably, the neural predictor achieves near-perfect attribute accuracy in these scenarios. Together, these findings suggest that \textit{even when the neural predictor is highly reliable, relying on it exclusively is suboptimal; judiciously incorporating information from the causal \pc can improve interventional performance.}
Under unseen data transformations, task accuracy increases monotonically with $\alpha$. Under adversarial attacks, it follows a similar inverted-U pattern as in the benign setting albeit peaking much later: at $\alpha=0.7$ on MNISTAdd, $\alpha=0.9$ on CelebA with two intervened attributes, and $\alpha=0.8$ on CelebA with four. These results indicate that \textit{even when the neural predictor is compromised, retaining a modest contribution from it still benefits interventional performance.}

Overall, preserving the neural predictor's evidence for the input and accounting for causal dependencies encoded in the \pc are both important, and an appropriate $\alpha$ is needed to balance these two sources of evidence. This naturally raises a follow-up question: \textit{is there a mechanism to adaptively select the optimal $\alpha$ for a given sample?} We discuss this possibility in Appendix~\ref{sec:discuss}.


\section{Conclusions}
To exploit the causal relationships among attributes, we propose \cnpc, a model that combines a neural predictor with a \pc compiled from a causal graph and estimates the interventional distribution via a PoE.
Both theoretical and empirical evidence show that \cnpc effectively improves the intervention efficiency over \npc, especially under OOD settings.
More broadly, \cnpc highlights a practical way to incorporate causal information into predictive models and to approximate causal inference within these models. Limitations and potential solutions are discussed in Appendix~\ref{sec:discuss}.

\bibliographystyle{unsrtnat}
\bibliography{neurips_2026}


\newpage
\appendix

\section{Discussion} \label{sec:discuss}
In this section, we discuss the limitations of the proposed method and outline potential solutions, which are categorized into the following three perspectives.

\paragraph{Reducing prerequisites.}
Our method assumes access to data annotated with both attribute and class labels, along with the underlying causal structure over these variables. We adopt this setting because our primary goal is to study how causal relationships can be effectively incorporated into and leveraged by concept bottleneck models. In practice, obtaining such prerequisites may be challenging, but several lines of work address each requirement. LLMs and VLMs have recently been used for automated attribute annotation in concept bottleneck models \citep{label_free_cbm, posthoc_cbm, vlg_cbm}, and a range of well-established causal discovery algorithms can recover plausible causal structures from observational data \citep{causal_discovery_pc, causal_discovery_hillclimb, causal_discovery_ges}. These tools can be incorporated into our pipeline as preprocessing steps when full prerequisites are unavailable, and we leave a systematic study of \cnpc under such relaxations to future work.

\paragraph{Improving \cnpc.}
We outline two directions for further improving \cnpc.

\textbf{(i) Adaptive selection of $\alpha$.}
In our main experiments, $\alpha$ is selected on the validation set in the benign setting and held fixed in the OOD settings. For each setting, all samples in the test set share the same value of $\alpha$ at test time. This setup allows us to isolate the contribution of the PoE formulation itself, without entangling it with a separate adaptation mechanism. A natural question, however, is whether $\alpha$ can be determined adaptively for each input. One promising approach is to tie $\alpha$ to a score from existing OOD detection algorithms~\citep{hendrycks2017baseline, liang2018odin}, so that inputs flagged as likely OOD receive a larger $\alpha$, placing more weight on the causal \pc and less on the potentially unreliable neural predictor. We leave a thorough study of such schemes, including their interaction with the calibration of OOD scores, to future work.

\textbf{(ii) Optimal ordering of attribute interventions.}
In our main experiments, attributes are intervened on in order of their depth in the causal graph, under the heuristic that higher-level attributes influence more downstream attributes and thus yield larger improvements on the target. This heuristic is motivated by the graph topology alone and is not claimed to be optimal. Identifying the truly optimal intervention set under a budget  (\ie, a fixed number of intervened attributes) likely depends on the specific causal mechanisms in addition to the topology. For instance, intervening on a causal parent of the target whose causal mechanism is near-deterministic may produce a substantial gain. A formal characterization of the optimal ordering, and an algorithm to compute it, are left to future work.

\paragraph{Solving more challenging problems.}
This work primarily addresses how intervening on a specific attribute (by setting it to its ground-truth label) influences the class distribution for a given instance $x$, \ie, computing $\Pr^{\doint(A_j:=a_j^*)}(Y\mid X=x)$, where $a_j^*$ is the ground-truth label of $x$'s $j$-th attribute. However, researchers may also be interested in counterfactual questions, such as \textit{``What would the outcome have been had the attribute been set to a different value?''}. This requires computing $\Pr^{\doint(A_j:=a_j^\prime)}(Y\mid X=x, A_j=a_j^*)$, where $a_j^\prime \neq a_j^*$. As noted in Section~\ref{sec:cpc}, the causal \pc supports exact, tractable counterfactual inference. Therefore, future work could leverage the outputs of the causal \pc to develop approximations of ground-truth counterfactual class distributions.

\section{Broader Impacts} \label{app:broad}
This work studies a new concept bottleneck model and inference procedure for improving intervention efficiency. Concept bottleneck models are designed to improve the interpretability of black-box neural networks by exposing intermediate concept predictions, and interventions further enable correcting mispredicted concept values before the final prediction is made. Both mechanisms aim to enhance model reliability, and we therefore do not anticipate direct negative societal impacts from the method itself.

\section{Preliminaries} \label{app:prelim}

\subsection{Causal probabilistic circuits} \label{app:cpc}
In this section, we elaborate on how VE schedules the factor operations (multiplication and sum-out) within a network polynomial, which in turn determines the structure of the compiled arithmetic circuit.

We continue the example from Section~\ref{sec:cpc}. That is, we consider three variables with the graph structure $V_2 \leftarrow V_1 \rightarrow V_3$, and apply VE with the variable elimination order $[V_2,\ V_3,\ V_1]$. 
Table~\ref{tab:ve} summarizes the VE schedule.
At each step, VE collects all current factors that involve the variable to be eliminated, multiplies them, and then sums out that variable.
This produces a new factor over the remaining variables, which is carried forward to subsequent steps.
Specifically, the new factors produced at different steps are: $\tau_1(V_1) = \sum_{V_2} \phi_{V_2}(V_1, V_2)\ \lambda_{V_2}(V_2)$, $\tau_2(V_1) = \sum_{V_3} \phi_{V_3}(V_1, V_3) \ \lambda_{V_3}(V_3)$, and the network polynomial $f = \sum_{V_1} \phi_{V_1}(V_1)\ \tau_1(V_1)\ \tau_2(V_1)\ \lambda_{V_1}(V_1)$.

Finding the optimal variable elimination order is known to be NP-hard. In our experiments, the variable elimination order is determined using the MinFill heuristic, which greedily selects the variable that adds the fewest new edges to the graph when eliminated.

Subsequently, the compiled arithmetic circuit is obtained by mapping each sum-out operation to a sum node and each multiplication operation to a product node, while leaf nodes correspond to symbolic parameters (the $\phi$ entries) and indicators (the $\lambda$ entries).

\begin{table}[htbp]
    \centering
    \caption{A run of VE for the network polynomial with the elimination order $[V_2,\ V_3,\ V_1]$.}
    \begin{tabular}{c|c|c|c|c}
        \toprule
         Step &  Variable   &  Factors                                                            &  Variables   & New            \\
              &  eliminated &  used                                                               &  involved    & factor         \\
        \midrule
         1    &  $V_2$      &  $\phi_{V_2}(V_1, V_2),\ \lambda_{V_2}(V_2)$                        &  $V_1,\ V_2$ & $\tau_1(V_1)$  \\
         2    &  $V_3$      &  $\phi_{V_3}(V_1, V_3),\ \lambda_{V_3}(V_3)$                        &  $V_1,\ V_3$ & $\tau_2(V_1)$  \\
         3    &  $V_1$      &  $\phi_{V_1}(V_1),\ \tau_1(V_1),\ \tau_2(V_1),\ \lambda_{V_1}(V_1)$ &  $V_1$       & $f$\\
         \bottomrule
    \end{tabular}
    \label{tab:ve}
\end{table}

\section{Derivation of the interventional class distribution} \label{app:derivation}
Let $a_{1:K}|_{\mathrm{Pa}_{\mathcal{C}}(Y)}$ denote the instantiation of $\mathrm{Pa}_{\mathcal{C}}(Y)$ induced by $a_{1:K}$.

Following Assumption~\ref{assump:sufficient}, Assumption~\ref{assump:structure} together with the local Markov property, and the invariance of causal mechanisms, Eq.~\eqref{equ:trueint} can be derived as follows:

\begin{align*}
    \pr^{\doint_j}(Y\mid X) 
    &= \sum_{a_{1:K}} \pr^{\doint_j}(Y\mid A_{1:K}=a_{1:K}, X) \cdot \pr^{\doint_j}(A_{1:K}=a_{1:K}\mid X),\\
    &= \sum_{a_{1:K}} \pr^{\doint_j}(Y\mid A_{1:K}=a_{1:K}) \cdot \pr^{\doint_j}(A_{1:K}=a_{1:K}\mid X),\\
    &= \sum_{a_{1:K}} \pr^{\doint_j}(Y\mid \mathrm{Pa}_{\mathcal{C}}(Y)=a_{1:K}|_{\mathrm{Pa}_{\mathcal{C}}(Y)}) \cdot \pr^{\doint_j}(A_{1:K}=a_{1:K}\mid X),\\
    &= \sum_{a_{1:K}} \pr(Y\mid \mathrm{Pa}_{\mathcal{C}}(Y)=a_{1:K}|_{\mathrm{Pa}_{\mathcal{C}}(Y)}) \cdot \pr^{\doint_j}(A_{1:K}=a_{1:K}\mid X),\\
    &= \sum_{a_{1:K}} \pr(Y\mid A_{1:K}=a_{1:K}) \cdot \pr^{\doint_j}(A_{1:K}=a_{1:K}\mid X).
\end{align*}

The first equality follows from the law of total probability. The second equality uses Assumption~\ref{assump:sufficient}, which states that $Y \perp X \mid A_{1:K}$. The third equality follows from Assumption~\ref{assump:structure} and the local Markov property: since $\mathrm{Pa}_{\mathcal{C}}(Y) \subseteq \{A_{1:K}\} \subseteq \mathrm{ND}_{\mathcal{C}}(Y)$ and $Y$ is conditionally independent of the non-descendants given $\mathrm{Pa}_{\mathcal{C}}(Y)$, conditioning on all attributes can be reduced to conditioning on the causal parents of $Y$. The fourth equality uses the invariance of causal mechanisms: the intervention $\doint_j$ acts on $A_j$ and does not change the causal mechanism of $Y$ (\ie, the distribution of $Y$ given its causal parents). The last equality again follows from Assumption~\ref{assump:structure} and the local Markov property.

\section{Experiments} \label{app:exp}
\subsection{Experimental settings} \label{app:exp_setup}
\subsubsection{Datasets} \label{app:datasets}
Here, we provide further details regarding the data processing procedure for each dataset.

\paragraph{Asia.} Following~\citet{c2bm}, we train an autoencoder to map the attribute labels into an embedding, which serves as the input to CBMs. Specifically, this autoencoder comprises two encoder layers and two decoder layers, with a latent dimension of 32. 
It takes a vector of all attribute labels for a given instance as input and is trained using Mean Squared Error (MSE) loss to reconstruct this vector. 
After training, the final instance embedding is constructed by combining 50\% of the encoder's output with 50\% noise sampled from a standard normal distribution.
This noise is injected to ensure the resulting embeddings are non-trivial representations of attributes.
Finally, the embeddings are standardized.

\paragraph{Sachs.} The Sachs dataset is processed using the identical procedure described for Asia.

\paragraph{MNISTAdd.}
The data-generating process for an instance in the MNISTAdd dataset proceeds in three steps. First, the value for attribute $A_1$ (the first digit) is drawn from a uniform distribution over 0–9. Second, the value for attribute $A_2$ (the second digit) is drawn from a highly skewed conditional distribution: with 80\% probability, it is assigned the value $(A_1 + 1)\%10$, while the remaining 20\% probability is distributed equally among the other possible digits. Finally, the class label is defined as their sum, and the instance image is formed by concatenating randomly sampled MNIST images corresponding to the sampled digit values. The causal graph depicting the relationships between the attributes and the class is illustrated in Figure~\ref{fig:causal_graph_mnistadd}.

Additionally, we generate two distinct test set variants to study model performance under OOD settings: (i) We apply a 180-degree rotation to each instance. (ii) We employ the PGD attack ($\epsilon=8/255$, $\alpha=2/255$, $50$ steps) to generate an adversarial perturbation for each instance.

\begin{figure}[tb]
    \centering
    \begin{subfigure}[b]{0.15\textwidth}
        \centering
        \includegraphics[width=\textwidth]{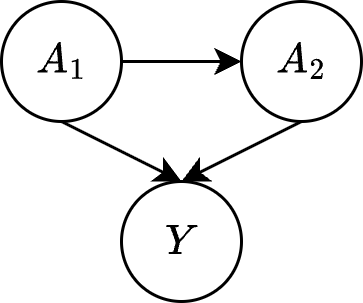}
        \caption{MNISTAdd}
        \label{fig:causal_graph_mnistadd}
    \end{subfigure}
    \begin{subfigure}[b]{0.40\textwidth}
        \centering
        \includegraphics[width=\textwidth]{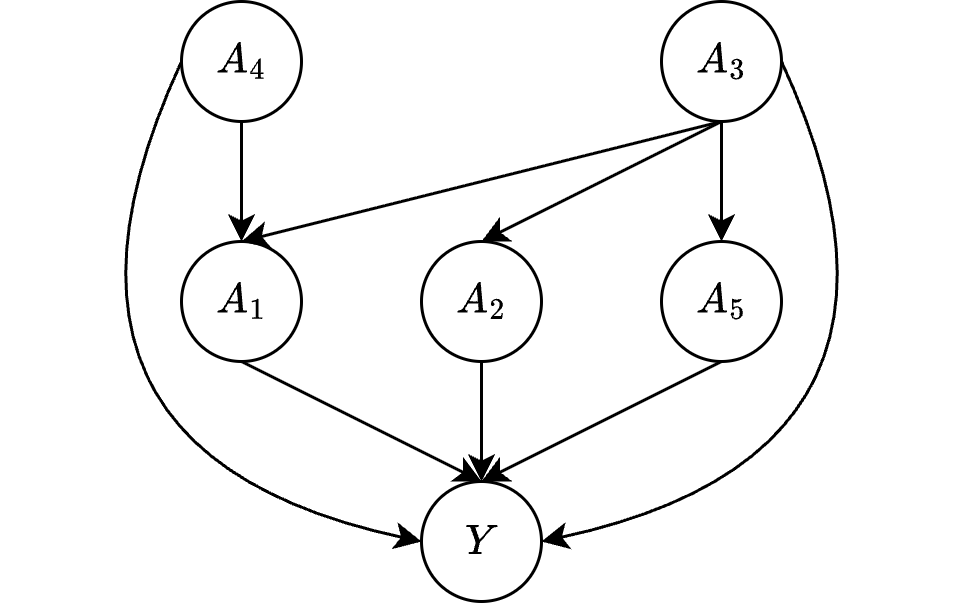}
        \caption{CelebA}
        \label{fig:causal_graph_celeba}
    \end{subfigure}
    
    \caption{Causal graphs over attributes $A_{1:K}$ and the class label $Y$. \textbf{Left:} The causal graph for MNISTAdd, with attributes $A_1$: the first digit and $A_2$: the second digit. \textbf{Right:} The causal graph for CelebA, with attributes $A_1$: Attractive, $A_2$: Mouth\_Slightly\_Open, $A_3$: Smiling, $A_4$: Wearing\_Lipstick, and $A_5$: High\_Cheekbones.}
    \label{fig:causal_graph}
\end{figure}

\paragraph{cMNISTAdd.}
The data-generating process for the cMNISTAdd dataset is similar to that for MNISTAdd, except that it introduces spurious correlations using colored digits. In the training and validation sets, the digits 0–9 are spuriously correlated with a specific list of colors; in the test set, this color-to-digit mapping is reversed.
Examples of instances from the training and test sets are illustrated in Figure~\ref{fig:cmnist}.

\begin{figure}[b]
    \centering
    \includegraphics[width=0.5\linewidth]{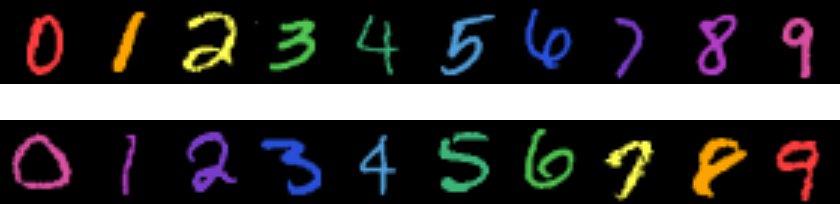}
    \caption{Instances from the cMNISTAdd dataset. \textbf{Top:} Examples from the training set. \textbf{Bottom:} Examples from the test set.}
    \label{fig:cmnist}
\end{figure}

\paragraph{CelebA.}
In this paper, we focus on the five most balanced attributes in CelebA: \textit{Attractive}, \textit{Mouth\_Slightly\_Open}, \textit{Smiling}, \textit{Wearing\_Lipstick}, and \textit{High\_Cheekbones}. We define the class label as a unique combination of these binary attributes, resulting in 32 distinct classes. Furthermore, we manually annotate the underlying causal graph over these attributes and the class label, as illustrated in Figure~\ref{fig:causal_graph_celeba}.

Additionally, we generate two distinct test set variants to study model performance under OOD settings: (i) We apply a 180-degree rotation to each instance. (ii) We employ the PGD attack ($\epsilon=2/255$, $\alpha=2/255$, $50$ steps) to generate an adversarial perturbation for each instance.

\paragraph{Licenses and terms of use.}
The Asia and Sachs datasets are generated by sampling from Bayesian networks provided by \texttt{bnlearn}, an R package distributed under GPL-2 or GPL-3. 
The MNISTAdd and cMNISTAdd datasets are constructed from MNIST images, which are distributed under the Creative Commons Attribution-ShareAlike 3.0 license. 
For CelebA, we use the images and attribute annotations only for non-commercial research purposes, in accordance with the official CelebA terms of use.

\subsubsection{Implementation details}\label{app:implementation_details}
To ensure the reliability and reproducibility of our results, we conduct all experiments across three independent trials using the random seeds 42, 52, and 62.
All experiments are conducted on NVIDIA RTX A6000 GPUs, each with 48 GB of memory.
Unless otherwise stated, for datasets containing image inputs, the images are resized to 224$\times$224$\times$3.

\paragraph{\cnpc.}
The attribute predictor of \cnpc is parameterized as a multi-task MLP consisting of a single shared hidden layer with a ReLU activation and independent linear classification heads for each attribute. The hidden dimension is set to 128.
We train the predictor for 100 epochs with a batch size of 256, using the average normalized cross-entropy loss across attributes from \citet{npc} as the training objective. We use the SGD optimizer with a learning rate of 0.01, a momentum of 0.9, and a weight decay of 4e-5.

\paragraph{\npc.}
\npc uses the attribute predictor architecture and training configuration described above. Its label predictor is a \pc whose structure is learned using LearnSPN~\citep{learnspn} and whose parameters are optimized using CCCP~\citep{cccp}. For a fair and direct comparison with \cnpc, we omit the final joint training stage of \npc proposed by \citet{npc}.

\paragraph{Vanilla CBM.}
In the vanilla CBM, the concept predictor is implemented as an MLP with a single hidden layer of dimension 128 and a ReLU activation, and the label predictor as a linear layer. The two predictors are trained independently, as \citet{cbm} show that this training paradigm achieves the highest intervention efficiency among the training paradigms they consider. The concept predictor is trained with binary cross-entropy over attribute labels, and the label predictor is trained separately with cross-entropy over class labels using ground-truth concepts. Both stages are trained for 100 epochs with a batch size of 256, using SGD with a learning rate of 0.01, momentum of 0.9, and weight decay of 4e-5.

\paragraph{CEM.}
In CEM, the concept predictor consists of a linear layer with a ReLU activation, followed by a set of concept blocks that produce concept probabilities and concept embeddings. The resulting high-dimensional concept embeddings are concatenated and passed through a linear layer to produce the final class predictions. 
We train CEM end-to-end for 100 epochs with a batch size of 256.
The training loss is the sum of the concept prediction loss and the class prediction loss, with the concept loss weight set to 1. 
We optimize the model using SGD with a learning rate of 0.01, a momentum of 0.9, and a weight decay of 4e-5.
We also adopt RandInt, a regularization strategy proposed by~\citet{cem}, which randomly applies independent concept interventions during training with probability 0.25 to improve intervention efficiency at inference time.

\paragraph{C$^2$BM and SCBM.}
To faithfully reproduce the results reported by \citet{c2bm}, we implement C$^2$BM and SCBM using the open-source code\footnote{\url{https://github.com/gdefe/causally-reliable-cbm}}.
For C$^2$BM, the input is encoded by a one-hidden-layer MLP, and each attribute and the class variable are further modeled by a block that produces embeddings and categorical probabilities.
The model is trained end-to-end with the sum of the attribute prediction loss and the class prediction loss, with the attribute loss weight set to 0.8.
During training, independent attribute interventions are randomly applied with probability 0.8.

For SCBM, the input is encoded by a one-hidden-layer MLP, after which the model predicts the mean of a multivariate distribution over concept logits and a learned global covariance matrix.
The model is trained end-to-end with a loss consisting of a class prediction loss, a Monte Carlo concept prediction loss based on binary cross-entropy, and an $\ell_1$ penalty on the off-diagonal entries of the precision matrix, with both the concept loss weight and the precision regularization weight set to 1.
Detailed training and evaluation configurations for \cnpc and all baseline models are provided in the supplementary code.

\subsection{Main results} \label{app:main}
\subsubsection{Performance in the benign setting} \label{app:benign}
Figure~\ref{fig:benign_attr} reports the mean attribute accuracy of \cnpc and baseline models in the benign setting on the Asia, Sachs, MNISTAdd, and CelebA datasets under varying numbers of attribute interventions. 
Since C$^2$BM, by construction, only predicts attributes that lie on the ancestral path from the class variable, attributes outside the class variable's ancestral subgraph are not predicted and have undefined accuracy. There exist such attributes on the Asia and Sachs datasets, \eg, ``xray'', ``Jnk'', ``Plcg'', and we therefore do not report C$^2$BM's mean attribute accuracy on these datasets.

We observe that all models perform comparably, with \cnpc marginally leading on Asia, Sachs, and MNISTAdd, and C$^2$BM marginally leading on CelebA. Together, these results suggest that models which incorporate the underlying causal structure, whether by compiling into a \pc or through neural approximations of structural equations, effectively propagate interventions to related attributes, yielding higher attribute accuracy under intervention.

\begin{figure*}[htbp]
    \centering
    \includegraphics[width=1\linewidth]{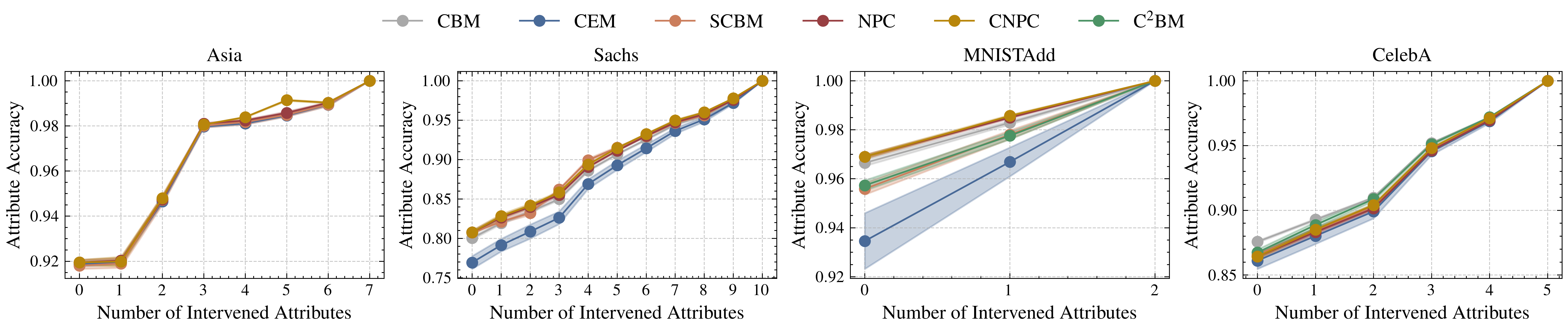}
    \caption{Attribute accuracy of \cnpc and baseline models in the benign setting on the Asia, Sachs, MNISTAdd, and CelebA datasets under varying numbers of attribute interventions. All results are averaged across three random seeds.}
    \label{fig:benign_attr}
\end{figure*}

\begin{figure*}[htbp]
    \centering
    \includegraphics[width=1\linewidth]{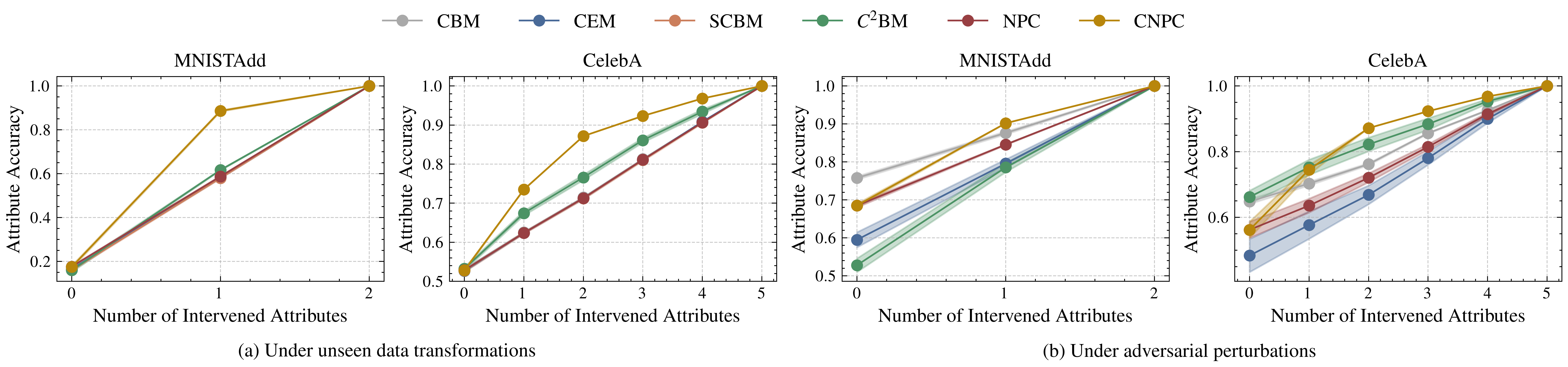}
    \caption{Attribute accuracy of \cnpc and baseline models in OOD settings on the MNISTAdd and CelebA datasets under varying numbers of attribute interventions. All results are averaged across three random seeds.}
    \label{fig:ood_attr}
\end{figure*}

\subsubsection{Performance in the OOD settings} \label{app:ood}
Figure~\ref{fig:ood_attr} reports the mean attribute accuracy of \cnpc and baseline models in the OOD settings on the MNISTAdd and CelebA datasets under varying numbers of attribute interventions. 
Before intervention, we find that the OOD shifts degrade attribute accuracy across all models. For example, the unseen data transformation drops attribute accuracy from over 92\% to below 20\% on MNISTAdd and from over 85\% to roughly 50\% on CelebA. Given this severe degradation in the neural predictor's reliability, interventions become essential for recovering downstream task performance. 

After intervention, we observe that \cnpc consistently attains the highest attribute accuracy across all OOD shifts, datasets, and intervention counts, demonstrating its effectiveness at propagating interventions through the causal structure to related attributes.

\section{Theoretical analysis} \label{app:theory}
In this section, we provide more theoretical results and elaborate on the proofs omitted in Section~\ref{sec:theory}.

\begin{theorem} \label{thm:benign_comp}
The expected prediction error of \npc (\cnpc), measured by KL divergence, is upper-bounded by the sum of the expected prediction error of the attribute predictor and that of the \pc (causal \pc). Specifically,
\begin{align*}
    \bE_{X \sim \pr_*} \left[ \KL\left( \pr_*(Y\mid X) \|\ \pr_{\theta,w}(Y\mid X) \right) \right]
    &\le
    \bE_{X \sim \pr_*} \left[ \KL\left( \pr_*(A_{1:K} \mid X) \|\ \pr_{\theta}(A_{1:K} \mid X) \right) \right] \\
    &+ \bE_{A_{1:K}\sim\pr_*} \left[ \KL\left( \pr_*(Y\mid A_{1:K}) \|\ \pr_{w}(Y\mid A_{1:K}) \right) \right].
\end{align*}
Equality holds if for each $(x,y)$, there exists a constant $c(x,y)$ such that
$\frac{\pr_*(a_{1:K}\mid x)\pr_*(y\mid a_{1:K})}{\pr_{\theta}(a_{1:K}\mid x)\pr_{w}(y\mid a_{1:K})} = c(x,y)$ for all $a_{1:K}$.
\end{theorem}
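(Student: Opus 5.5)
The bound will follow from the joint convexity of KL divergence (equivalently, the log-sum inequality) applied to the two mixture representations of $\pr_*(Y\mid X)$ and $\pr_{\theta,w}(Y\mid X)$, followed by the chain rule for KL.

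\textbf{Mixture representations.} By Assumption~\ref{assump:sufficient}, the ground truth satisfies
\[
\pr_*(y\mid x)=\sum_{a_{1:K}}\pr_*(a_{1:K}\mid x)\,\pr_*(y\mid a_{1:K}).
\]
By Eq.~\eqref{eq:npc}, the model output has the same form with $\pr_\theta(a_{1:K}\mid x)=\prod_k\pr_\theta(a_k\mid x)$ and $\pr_w(y\mid a_{1:K})$. When there is no intervention, \cnpc uses Eq.~\eqref{eq:npc} as well, so one argument covers both models; the only change is that $\pr_w$ comes from the causal \pc instead of the \pc.

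\textbf{Log-sum step.} Fix $x$. For each $y$, apply the log-sum inequality with
\[
u_a=\pr_*(a\mid x)\,\pr_*(y\mid a), \qquad v_a=\pr_\theta(a\mid x)\,\pr_w(y\mid a).
\]
This gives
\[
\pr_*(y\mid x)\log\frac{\pr_*(y\mid x)}{\pr_{\theta,w}(y\mid x)}\le\sum_a u_a\log\frac{u_a}{v_a}.
\]
Summing over $y$ shows that $\KL(\pr_*(Y\mid x)\,\|\,\pr_{\theta,w}(Y\mid x))$ is bounded by the KL divergence between the joint distributions over $(A_{1:K},Y)$ given $x$, namely $\pr_*(a\mid x)\pr_*(y\mid a)$ versus $\pr_\theta(a\mid x)\pr_w(y\mid a)$.

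\textbf{Chain rule.} Split the logarithm of the ratio into the attribute part and the label part. This yields
\[
\KL\bigl(\pr_*(A_{1:K}\mid x)\,\|\,\pr_\theta(A_{1:K}\mid x)\bigr)+\sum_a\pr_*(a\mid x)\,\KL\bigl(\pr_*(Y\mid a)\,\|\,\pr_w(Y\mid a)\bigr).
\]
Take $\bE_{X\sim\pr_*}$ of both terms. The second term becomes an expectation over the marginal $\pr_*(A_{1:K})$, since $\sum_x\pr_*(x)\pr_*(a\mid x)=\pr_*(a)$. This produces exactly the stated right-hand side.

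\textbf{Equality condition.} The log-sum inequality is tight exactly when $u_a/v_a$ is constant in $a$, for each $(x,y)$. That is the stated condition with constant $c(x,y)$, and it suffices for equality in every summand.

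\textbf{Main care point.} The only delicate part is the support conventions. Terms with $u_a=0$ contribute $0$. If some $v_a=0$ while $u_a>0$, the right-hand side is $+\infty$ and the bound holds trivially. Beyond this, the argument is routine; it is also the template that, with $\pr_*^{\doint_j}$ in place of $\pr_*$ and Eq.~\eqref{equ:trueint} in place of Assumption~\ref{assump:sufficient}, yields Corollary~\ref{thm:int_comp_npc}.
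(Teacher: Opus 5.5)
Your proposal is correct and follows essentially the same route as the paper: you write both predictive distributions as mixtures via Assumption~\ref{assump:sufficient}, apply the log-sum inequality to bound the label-level KL by the KL over $(A_{1:K},Y)$, and split that by the chain rule into the attribute and label terms, arriving at the same equality condition. The only difference is presentational. The paper first lifts everything to joint distributions over $(X,Y)$ and $(X,A_{1:K},Y)$ that share the marginal $\pr_*(X)$. You instead argue pointwise in $x$ and take the expectation at the end, which is equivalent and slightly cleaner when $X$ is continuous.
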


\begin{proof}

We proceed in three steps.

\paragraph{Step 1.} Show that $$\bE_{X \sim \pr_*} \left[ \KL\left( \pr_*(Y\mid X) \|\ \pr_{\theta,w}(Y\mid X) \right) \right] = \KL\left( \pr_*(X,Y) \|\ \pr_{\theta,w}(X,Y) \right),$$ where $\pr_{\theta,w}(X,Y) = \pr_*(X) \ \pr_{\theta,w}(Y\mid X)$.

Rewriting $\pr_*(X,Y)$ as $\pr_*(X) \ \pr_*(Y\mid X)$, we can expand the RHS as follows.
\begin{align*}
    \KL\left( \pr_*(X,Y) \|\ \pr_{\theta,w}(X,Y) \right)
    &= \sum_{x,y} \pr_*(x,y)\ \log\frac{\pr_*(y\mid x)}{\pr_{\theta,w}(y\mid x)} \\
    &= \sum_x \pr_*(x) \sum_y \pr_*(y\mid x)\ \log\frac{\pr_*(y\mid x)}{\pr_{\theta,w}(y\mid x)} \\
    &= \sum_x \pr_*(x)\ \KL\left( \pr_*(Y\mid x) \|\ \pr_{\theta,w}(Y\mid x) \right) \\
    &=\bE_{X\sim \pr_*} \left[ \KL\left( \pr_*(Y\mid X) \|\ \pr_{\theta,w}(Y\mid X) \right) \right].
\end{align*}

\paragraph{Step 2.} Show that $$\KL\left( \pr_*(X,Y) \|\ \pr_{\theta,w}(X,Y) \right) \le \KL\left( \pr_*(X,A_{1:K},Y) \|\ \pr_{\theta,w}(X,A_{1:K},Y) \right),$$ where $\pr_{\theta,w}(X,A_{1:K},Y) = \pr_*(X)\ \pr_{\theta}(A_{1:K}\mid X)\ \pr_{w}(Y\mid A_{1:K})$.

Under Assumption~\ref{assump:sufficient}, $\pr_{\theta,w}(x,a_{1:K},y)$ can be written as $\pr_*(x)\ \pr_{\theta,w}(a_{1:K},y\mid x)$; thus, the marginal $\sum_{a_{1:K}} \pr_{\theta,w}(x,a_{1:K},y)$ equals $\pr_{\theta,w}(x,y)$.

Following this, we can expand the LHS as follows.
\begin{align*}
    \KL\left( \pr_*(X,Y) \|\ \pr_{\theta,w}(X,Y) \right)
    &= \sum_{x,y} \pr_*(x,y)\ \log\frac{\pr_*(x,y)}{\pr_{\theta,w}(x,y)} \\
    &= \sum_{x,y} \left( \sum_{a_{1:K}}\pr_*(x,a_{1:K},y) \right)\ \log\frac{\sum_{a_{1:K}}\pr_*(x,a_{1:K},y)}{\sum_{a_{1:K}}\pr_{\theta,w}(x,a_{1:K},y)} \\
    \text{using the log-sum inequality}
    &\le \sum_{x,y} \sum_{a_{1:K}} \pr_*(x,a_{1:K},y)\ \log \frac{\pr_*(x,a_{1:K},y)}{\pr_{\theta,w}(x,a_{1:K},y)} \\
    &= \KL\left( \pr_*(X,A_{1:K},Y) \|\ \pr_{\theta,w}(X,A_{1:K},Y) \right).
\end{align*}

Equality holds iff for each $(x,y)$, $\frac{\pr_*(x,a_{1:K},y)}{\pr_{\theta,w}(x,a_{1:K},y)}$ is some constant $c(x,y)$ for all $a_{1:K}$ with $\pr_*(x,a_{1:K},y),\ \pr_{\theta,w}(x,a_{1:K},y)>0$.

\paragraph{Step 3.} Decompose $\KL\left( \pr_*(X,A_{1:K},Y) \|\ \pr_{\theta,w}(X,A_{1:K},Y) \right)$.

Under Assumption~\ref{assump:sufficient}, $\pr_*(X,A_{1:K},Y)$ can be written as $ \pr_*(X)\ \pr_*(A_{1:K}\mid X)\ \pr_*(Y\mid A_{1:K})$.

Following this, we can decompose the target term as follows.
\begin{equation*}
\resizebox{\linewidth}{!}{$
\begin{aligned}
\KL\left( \pr_*(X,A_{1:K},Y) \|\ \pr_{\theta,w}(X,A_{1:K},Y) \right)
    &= \sum_{x,a_{1:K},y} \pr_*(x,a_{1:K},y)\ \log \frac{\pr_*(x,a_{1:K},y)}{\pr_{\theta,w}(x,a_{1:K},y)} \\
    &= \sum_{x,a_{1:K},y} \pr_*(x,a_{1:K},y)\ \log \frac{\pr_*(a_{1:K}\mid x)}{\pr_{\theta}(a_{1:K}\mid x)}
    + \sum_{x,a_{1:K},y} \pr_*(x,a_{1:K},y)\ \log \frac{\pr_*(y\mid a_{1:K})}{\pr_{w}(y\mid a_{1:K})}.
\end{aligned}
$}
\end{equation*}
We use $(*_1)$ and $(*_2)$ to denote these two terms.

We rewrite the first term.
\begin{align*}
    (*_1)
    &= \sum_{x,a_{1:K}} \pr_*(x,a_{1:K})\ \log \frac{\pr_*(a_{1:K}\mid x)}{\pr_{\theta}(a_{1:K}\mid x)} \\
    &= \sum_{x} \pr_*(x) \sum_{a_{1:K}} \pr_*(a_{1:K}\mid x) \log \frac{\pr_*(a_{1:K}\mid x)}{\pr_{\theta}(a_{1:K}\mid x)} \\
    &= \sum_{x} \pr_*(x)\ \KL\left( \pr_*(A_{1:K}\mid x) \|\ \pr_{\theta}(A_{1:K}\mid x) \right) \\
    &= \bE_{X\sim\pr_*} \left[ \KL\left( \pr_*(A_{1:K}\mid X) \|\ \pr_{\theta}(A_{1:K}\mid X) \right) \right].
\end{align*}

We also rewrite the second term.
\begin{align*}
    (*_2)
    &= \sum_{x,a_{1:K}} \pr_*(x,a_{1:K}) \sum_{y} \pr_*(y\mid a_{1:K})\ \log \frac{\pr_*(y\mid a_{1:K})}{\pr_{w}(y\mid a_{1:K})} \\
    &= \sum_{x,a_{1:K}} \pr_*(x,a_{1:K})\ \KL\left( \pr_*(Y\mid a_{1:K}) \|\ \pr_{w}(Y\mid a_{1:K}) \right) \\
    &= \sum_{a_{1:K}} \pr_*(a_{1:K})\ \KL\left( \pr_*(Y\mid a_{1:K}) \|\ \pr_{w}(Y\mid a_{1:K}) \right) \\
    &= \bE_{A_{1:K}\sim\pr_*} \left[ \KL\left( \pr_*(Y\mid A_{1:K}) \|\ \pr_{w}(Y\mid A_{1:K}) \right) \right].
\end{align*}

Summarizing the three steps, we have:
\begin{align*}
    \bE_{X \sim \pr_*} \left[ \KL\left( \pr_*(Y\mid X) \|\ \pr_{\theta,w}(Y\mid X) \right) \right]
    &\le
    \bE_{X \sim \pr_*} \left[ \KL\left( \pr_*(A_{1:K} \mid X) \|\ \pr_{\theta}(A_{1:K} \mid X) \right) \right] \\
    &+ \bE_{A_{1:K}\sim\pr_*} \left[ \KL\left( \pr_*(Y\mid A_{1:K}) \|\ \pr_{w}(Y\mid A_{1:K}) \right) \right].
\end{align*}
Equality holds iff for each $(x,y)$, $\frac{\pr_*(x,a_{1:K},y)}{\pr_{\theta,w}(x,a_{1:K},y)} = \frac{\pr_*(a_{1:K}\mid x)\pr_*(y\mid a_{1:K})}{\pr_{\theta}(a_{1:K}\mid x)\pr_{w}(y\mid a_{1:K})}$ is some constant $c(x,y)$ for all $a_{1:K}$ with $\pr_*(x,a_{1:K},y),\ \pr_{\theta,w}(x,a_{1:K},y)>0$.

\end{proof}

\begin{corollary}[Restatement of Corollary~\ref{thm:int_comp_npc}]
The expected interventional error of \npc is upper-bounded by the sum of the expected interventional error of the attribute predictor and the expected prediction error of the \pc. Specifically,
{\small
\begin{align*}
    \bE_{X\sim\pr_*^{\doint_j}} \left[ \KL\left( \pr_*^{\doint_j}(Y\mid X) \|\ \pr^{\doint_j}_{\theta,w}(Y\mid X) \right) \right]
    &\le
    \bE_{X\sim\pr_*^{\doint_j}} \left[ \KL\left( \pr_*^{\doint_j}(A_{1:K} \mid X) \|\ \pr^{\doint_j}_{\theta}(A_{1:K} \mid X) \right) \right] \\
    &+ \bE_{A_{1:K}\sim\pr_*^{\doint_j}} \left[ \KL\left( \pr_*(Y\mid A_{1:K}) \|\ \pr_{w}(Y\mid A_{1:K}) \right) \right].
\end{align*}}
Equality holds if for each $(x,y)$, there exists a constant $c(x,y)$ such that
$\frac{\pr_*^{\doint_j}(a_{1:K}\mid x)\pr_*(y\mid a_{1:K})}{\pr_{\theta}^{\doint_j}(a_{1:K}\mid x)\pr_{w}(y\mid a_{1:K})} = c(x,y)$ for all $a_{1:K}$.
\end{corollary}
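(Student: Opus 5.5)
The plan is to rerun the three-step argument of Theorem~\ref{thm:benign_comp} verbatim in the interventional world, with every ground-truth distribution replaced by its $\doint_j$ counterpart and the model-side joint built from the clamped predictor. Concretely, define the model joint
\[
\pr^{\doint_j}_{\theta,w}(X,A_{1:K},Y) := \pr_*^{\doint_j}(X)\,\pr^{\doint_j}_{\theta}(A_{1:K}\mid X)\,\pr_w(Y\mid A_{1:K}).
\]
By Eq.~\eqref{eq:npc_int}, marginalizing $A_{1:K}$ gives $\pr_*^{\doint_j}(X)\,\pr^{\doint_j}_{\theta,w}(Y\mid X)$. On the ground-truth side, I will use the derivation of Eq.~\eqref{equ:trueint} in Appendix~\ref{app:derivation}, which holds under Assumptions~\ref{assump:sufficient} and~\ref{assump:structure} together with invariance of mechanisms. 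It gives $\pr_*^{\doint_j}(Y\mid A_{1:K},X)=\pr_*(Y\mid A_{1:K})$, and hence the factorization
\[
\pr_*^{\doint_j}(X,A_{1:K},Y)=\pr_*^{\doint_j}(X)\,\pr_*^{\doint_j}(A_{1:K}\mid X)\,\pr_*(Y\mid A_{1:K}).
\]
This factorization is the only genuinely new ingredient compared with the observational theorem.

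The three steps are as follows.
\begin{itemize}
\item[(1)] Because both joints share the $X$-marginal $\pr_*^{\doint_j}(X)$, the expected conditional KL on the left-hand side equals $\KL(\pr_*^{\doint_j}(X,Y)\,\|\,\pr^{\doint_j}_{\theta,w}(X,Y))$.
\item[(2)] Writing each $(x,y)$ entry as a sum over $a_{1:K}$ and applying the log-sum inequality gives an upper bound by the KL of the full joints over $(X,A_{1:K},Y)$. Equality holds exactly when the ratio $\pr_*^{\doint_j}(a_{1:K}\mid x)\pr_*(y\mid a_{1:K})/(\pr^{\doint_j}_\theta(a_{1:K}\mid x)\pr_w(y\mid a_{1:K}))$ is constant in $a_{1:K}$ for each $(x,y)$, which is the stated equality condition.
\item[(3)] Split the log of the joint ratio into an attribute term and a label term. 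The attribute term collapses to $\bE_{X\sim\pr_*^{\doint_j}}\KL(\pr_*^{\doint_j}(A_{1:K}\mid X)\,\|\,\pr^{\doint_j}_\theta(A_{1:K}\mid X))$. For the label term, first sum over $y$ to get the inner KL, then marginalize $x$ out of $\pr_*^{\doint_j}(x,a_{1:K})$. This leaves an expectation over $A_{1:K}\sim\pr_*^{\doint_j}$ of $\KL(\pr_*(Y\mid A_{1:K})\,\|\,\pr_w(Y\mid A_{1:K}))$.
\end{itemize}

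The main obstacle is justifying that $\pr_*(Y\mid A_{1:K})$, and not some interventional conditional, appears in the true interventional joint. This is exactly the content of Appendix~\ref{app:derivation}, so I will cite it. A minor check is the clamped factor $\pr^{\doint_j}_\theta$, which contains the indicator $\mathbb{I}(A_j=a_j^*(X))$. Since the true interventional conditional also puts all its mass on $a_j^*(x)$, the attribute KL stays finite. Terms with zero probability are handled by the usual $0\log 0=0$ convention, as in Theorem~\ref{thm:benign_comp}.
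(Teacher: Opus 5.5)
Your proposal is correct and follows the paper's proof essentially step for step: the same three steps (conditional KL rewritten as a joint KL over $(X,Y)$, the log-sum inequality over $a_{1:K}$, and splitting the joint log-ratio into an attribute term and a label term), with the same key fact $\pr_*^{\doint_j}(Y\mid A_{1:K},X)=\pr_*(Y\mid A_{1:K})$ coming from Assumptions~\ref{assump:sufficient}--\ref{assump:structure} and invariance of mechanisms. The only difference is cosmetic: you put $\pr_w(Y\mid A_{1:K})$ directly into the model joint, exactly as in Eq.~\eqref{eq:npc_int}, whereas the paper writes $\pr_w^{\doint_j}(Y\mid A_{1:K})$ and then identifies it with $\pr_w(Y\mid A_{1:K})$; your version is slightly cleaner, since a standard \pc has no native interventional semantics.
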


\begin{proof}
This result is a corollary of \Cref{thm:benign_comp}. The proof follows the same steps, except that all distributions are taken under the intervention. For completeness, we still spell out the three steps below.

\paragraph{Step 1.} Show that $$\bE_{X \sim \pr_*^{\doint_j}} \left[ \KL\left( \pr_*^{\doint_j}(Y\mid X) \|\ \pr_{\theta,w}^{\doint_j}(Y\mid X) \right) \right] = \KL\left( \pr_*^{\doint_j}(X,Y) \|\ \pr_{\theta,w}^{\doint_j}(X,Y) \right),$$ where $\pr_{\theta,w}^{\doint_j}(X,Y) = \pr_*^{\doint_j}(X) \ \pr_{\theta,w}^{\doint_j}(Y\mid X)$.

Rewriting $\pr_*^{\doint_j}(X,Y)$ as $\pr_*^{\doint_j}(X) \ \pr_*^{\doint_j}(Y\mid X)$, we can expand the RHS as follows.
\begin{align*}
    \KL\left( \pr_*^{\doint_j}(X,Y) \|\ \pr_{\theta,w}^{\doint_j}(X,Y) \right)
    &= \sum_{x,y} \pr_*^{\doint_j}(x,y)\ \log\frac{\pr_*^{\doint_j}(y\mid x)}{\pr_{\theta,w}^{\doint_j}(y\mid x)} \\
    &= \sum_x \pr_*^{\doint_j}(x) \sum_y \pr_*^{\doint_j}(y\mid x)\ \log\frac{\pr_*^{\doint_j}(y\mid x)}{\pr_{\theta,w}^{\doint_j}(y\mid x)} \\
    &= \sum_x \pr_*^{\doint_j}(x)\ \KL\left( \pr_*^{\doint_j}(Y\mid x) \|\ \pr_{\theta,w}^{\doint_j}(Y\mid x) \right) \\
    &=\bE_{X\sim \pr_*^{\doint_j}} \left[ \KL\left( \pr_*^{\doint_j}(Y\mid X) \|\ \pr_{\theta,w}^{\doint_j}(Y\mid X) \right) \right].
\end{align*}

\paragraph{Step 2.} Show that $$\KL\left( \pr_*^{\doint_j}(X,Y) \|\ \pr_{\theta,w}^{\doint_j}(X,Y) \right) \le \KL\left( \pr_*^{\doint_j}(X,A_{1:K},Y) \|\ \pr_{\theta,w}^{\doint_j}(X,A_{1:K},Y) \right),$$ where $\pr_{\theta,w}^{\doint_j}(X,A_{1:K},Y) = \pr_*^{\doint_j}(X)\ \pr_{\theta}^{\doint_j}(A_{1:K}\mid X)\ \pr_{w}^{\doint_j}(Y\mid A_{1:K})$.

Under Assumption~\ref{assump:sufficient}, $\pr_{\theta,w}^{\doint_j}(x,a_{1:K},y)$ can be written as $\pr_*^{\doint_j}(x)\ \pr_{\theta,w}^{\doint_j}(a_{1:K},y\mid x)$; thus, the marginal $\sum_{a_{1:K}} \pr_{\theta,w}^{\doint_j}(x,a_{1:K},y)$ equals $\pr_{\theta,w}^{\doint_j}(x,y)$.

Following this, we can expand the LHS as follows.
\begin{align*}
    \KL\left( \pr_*^{\doint_j}(X,Y) \|\ \pr_{\theta,w}^{\doint_j}(X,Y) \right)
    &= \sum_{x,y} \pr_*^{\doint_j}(x,y)\ \log\frac{\pr_*^{\doint_j}(x,y)}{\pr_{\theta,w}^{\doint_j}(x,y)} \\
    &= \sum_{x,y} \left( \sum_{a_{1:K}}\pr_*^{\doint_j}(x,a_{1:K},y) \right)\ \log\frac{\sum_{a_{1:K}}\pr_*^{\doint_j}(x,a_{1:K},y)}{\sum_{a_{1:K}}\pr_{\theta,w}^{\doint_j}(x,a_{1:K},y)} \\
    \text{using the log-sum inequality}
    &\le \sum_{x,y} \sum_{a_{1:K}} \pr_*^{\doint_j}(x,a_{1:K},y)\ \log \frac{\pr_*^{\doint_j}(x,a_{1:K},y)}{\pr_{\theta,w}^{\doint_j}(x,a_{1:K},y)} \\
    &= \KL\left( \pr_*^{\doint_j}(X,A_{1:K},Y) \|\ \pr_{\theta,w}^{\doint_j}(X,A_{1:K},Y) \right).
\end{align*}

Equality holds iff for each $(x,y)$, $\frac{\pr_*^{\doint_j}(x,a_{1:K},y)}{\pr_{\theta,w}^{\doint_j}(x,a_{1:K},y)}$ is some constant $c(x,y)$ for all $a_{1:K}$ with $\pr_*^{\doint_j}(x,a_{1:K},y),\ \pr_{\theta,w}^{\doint_j}(x,a_{1:K},y)>0$.

\paragraph{Step 3.} Decompose $\KL\left( \pr_*^{\doint_j}(X,A_{1:K},Y) \|\ \pr_{\theta,w}^{\doint_j}(X,A_{1:K},Y) \right)$.

Under Assumption~\ref{assump:sufficient}, $\pr_*^{\doint_j}(X,A_{1:K},Y)$ can be written as $ \pr_*^{\doint_j}(X)\ \pr_*^{\doint_j}(A_{1:K}\mid X)\ \pr_*^{\doint_j}(Y\mid A_{1:K})$.

Following this, we can decompose the target term as follows.
\begin{equation*}
\resizebox{\linewidth}{!}{$
\begin{aligned}
\KL\left( \pr_*^{\doint_j}(X,A_{1:K},Y) \|\ \pr_{\theta,w}^{\doint_j}(X,A_{1:K},Y) \right)
    &= \sum_{x,a_{1:K},y} \pr_*^{\doint_j}(x,a_{1:K},y)\ \log \frac{\pr_*^{\doint_j}(x,a_{1:K},y)}{\pr_{\theta,w}^{\doint_j}(x,a_{1:K},y)} \\
    &= \sum_{x,a_{1:K},y} \pr_*^{\doint_j}(x,a_{1:K},y)\ \log \frac{\pr_*^{\doint_j}(a_{1:K}\mid x)}{\pr_{\theta}^{\doint_j}(a_{1:K}\mid x)}
    + \sum_{x,a_{1:K},y} \pr_*^{\doint_j}(x,a_{1:K},y)\ \log \frac{\pr_*^{\doint_j}(y\mid a_{1:K})}{\pr_{w}^{\doint_j}(y\mid a_{1:K})}
\end{aligned}
$}
\end{equation*}
We use $(*_1)$ and $(*_2)$ to denote these two terms.

We rewrite the first term.
\begin{align*}
    (*_1)
    &= \sum_{x,a_{1:K}} \pr_*^{\doint_j}(x,a_{1:K})\ \log \frac{\pr_*^{\doint_j}(a_{1:K}\mid x)}{\pr_{\theta}^{\doint_j}(a_{1:K}\mid x)} \\
    &= \sum_{x} \pr_*^{\doint_j}(x) \sum_{a_{1:K}} \pr_*^{\doint_j}(a_{1:K}\mid x) \log \frac{\pr_*^{\doint_j}(a_{1:K}\mid x)}{\pr_{\theta}^{\doint_j}(a_{1:K}\mid x)} \\
    &= \sum_{x} \pr_*^{\doint_j}(x)\ \KL\left( \pr_*^{\doint_j}(A_{1:K}\mid x) \|\ \pr_{\theta}^{\doint_j}(A_{1:K}\mid x) \right) \\
    &= \bE_{X\sim\pr_*^{\doint_j}} \left[ \KL\left( \pr_*^{\doint_j}(A_{1:K}\mid X) \|\ \pr_{\theta}^{\doint_j}(A_{1:K}\mid X) \right) \right].
\end{align*}

By assumption, $\mathrm{Pa}_{\mathcal{C}}(Y) \subseteq \{A_{1:K}\} \subseteq \mathrm{ND}_{\mathcal{C}}(Y)$.
Hence, by the local Markov property and the invariance of causal mechanisms, we have:
\begin{align*}
    \pr_*(Y\mid A_{1:K}) &= \pr_*(Y\mid \mathrm{Pa}(Y)), \\
    \pr^{\doint_j}_*(Y\mid A_{1:K}) &= \pr^{\doint_j}_*(Y\mid \mathrm{Pa}(Y)), \\
    \pr_*(Y\mid \mathrm{Pa}(Y)) &= \pr^{\doint_j}_*(Y\mid \mathrm{Pa}(Y)).
\end{align*}

Consequently, $\pr_*(Y\mid A_{1:K}) = \pr^{\doint_j}_*(Y\mid A_{1:K})$, and similarly, $\pr_w(Y\mid A_{1:K}) = \pr^{\doint_j}_w(Y\mid A_{1:K})$. Thus, the second term can be written as:
\begin{align*}
    (*_2)
    &= \sum_{x,a_{1:K}} \pr_*^{\doint_j}(x,a_{1:K}) \sum_{y} \pr_*^{\doint_j}(y\mid a_{1:K})\ \log \frac{\pr_*^{\doint_j}(y\mid a_{1:K})}{\pr_{w}^{\doint_j}(y\mid a_{1:K})} \\
    &= \sum_{x,a_{1:K}} \pr_*^{\doint_j}(x,a_{1:K}) \sum_{y} \pr_*(y\mid a_{1:K})\ \log \frac{\pr_*(y\mid a_{1:K})}{\pr_{w}(y\mid a_{1:K})} \\
    &= \sum_{x,a_{1:K}} \pr_*^{\doint_j}(x,a_{1:K})\ \KL\left( \pr_*(Y\mid a_{1:K}) \|\ \pr_{w}(Y\mid a_{1:K}) \right) \\
    &= \sum_{a_{1:K}} \pr_*^{\doint_j}(a_{1:K})\ \KL\left( \pr_*(Y\mid a_{1:K}) \|\ \pr_{w}(Y\mid a_{1:K}) \right) \\
    &= \bE_{A_{1:K}\sim\pr_*^{\doint_j}} \left[ \KL\left( \pr_*(Y\mid A_{1:K}) \|\ \pr_{w}(Y\mid A_{1:K}) \right) \right]
\end{align*}

Summarizing the three steps, we have:
\begin{equation*}
\resizebox{\linewidth}{!}{
$
\begin{aligned}
    \bE_{X\sim\pr_*^{\doint_j}} \left[ \KL\left( \pr_*^{\doint_j}(Y\mid X) \|\ \pr^{\doint_j}_{\theta,w}(Y\mid X) \right) \right]
    &\le
    \bE_{X\sim\pr_*^{\doint_j}} \left[ \KL\left( \pr_*^{\doint_j}(A_{1:K} \mid X) \|\ \pr^{\doint_j}_{\theta}(A_{1:K} \mid X) \right) \right] \\
    &+ \bE_{A_{1:K}\sim\pr_*^{\doint_j}} \left[ \KL\left( \pr_*(Y\mid A_{1:K}) \|\ \pr_{w}(Y\mid A_{1:K}) \right) \right].
\end{aligned}
$
}
\end{equation*}
Equality holds iff for each $(x,y)$, $\frac{\pr_*^{\doint_j}(x,a_{1:K},y)}{\pr_{\theta,w}^{\doint_j}(x,a_{1:K},y)} = \frac{\pr_*^{\doint_j}(a_{1:K}\mid x)\pr_*^{\doint_j}(y\mid a_{1:K})}{\pr_{\theta}^{\doint_j}(a_{1:K}\mid x)\pr_{w}^{\doint_j}(y\mid a_{1:K})} = \frac{\pr_*^{\doint_j}(a_{1:K}\mid x)\pr_*(y\mid a_{1:K})}{\pr_{\theta}^{\doint_j}(a_{1:K}\mid x)\pr_{w}(y\mid a_{1:K})}$ is some constant $c(x,y)$ for all $a_{1:K}$ with $\pr_*^{\doint_j}(x,a_{1:K},y),\ \pr_{\theta,w}^{\doint_j}(x,a_{1:K},y)>0$.

\end{proof}

\begin{corollary}[Restatement of Corollary~\ref{thm:int_comp_cnpc}]
The expected interventional error of \cnpc is upper-bounded by a weighted sum of (i) the expected interventional error of the attribute predictor, (ii) the expected interventional error of the causal \pc over attributes, and (iii) the expected prediction error of the causal \pc. Specifically,
\begin{equation*}
\resizebox{\linewidth}{!}{
$
\begin{aligned}
\bE_{X\sim\pr_*^{\doint_j}} \!\left[ \KL\!\left( \pr_*^{\doint_j}(Y\mid X)\ \|\ \tilde{\pr}^{\doint_j}_{\theta,w,\alpha}(Y\mid X) \right) \right] 
&\le
(1-\alpha)\ \bE_{X\sim\pr_*^{\doint_j}} \!\left[ \KL \!\left( \pr_*^{\doint_j}(A_{1:K}\mid X)\ \|\ \pr^{\doint_j}_{\theta}(A_{1:K}\mid X) \right) \right] \\
&+ \alpha\ \bE_{X\sim\pr_*^{\doint_j}} \!\left[ \KL \!\left( \pr_*^{\doint_j}(A_{1:K}\mid X)\ \|\ \pr^{\doint_j}_{w}(A_{1:K}) \right) \right] \\
&+ \bE_{A_{1:K}\sim\pr_*^{\doint_j}} \left[ \KL\left( \pr_*(Y\mid A_{1:K}) \|\ \pr_{w}(Y\mid A_{1:K}) \right) \right].
\end{aligned}
$
}
\end{equation*}
Equality holds if for each $(x,y)$, $\pr^{\doint_j}_\theta(a_{1:K} \mid x)=\pr^{\doint_j}_w(a_{1:K})$ for all $a_{1:K}$, and there exists a constant $c(x,y)$ such that $\frac{\pr_*^{\doint_j}(a_{1:K}\mid x)\pr_*(y\mid a_{1:K})}{\pr_{\theta}^{\doint_j}(a_{1:K}\mid x)\pr_{w}(y\mid a_{1:K})} = c(x,y)$ for all $a_{1:K}$.
\end{corollary}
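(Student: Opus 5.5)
The plan is to reuse the three-step argument of \Cref{thm:benign_comp} and of the proof of Corollary~\ref{thm:int_comp_npc}, with the clamped predictor $\pr^{\doint_j}_\theta(A_{1:K}\mid X)$ replaced by the PoE $\tilde{\pr}^{\doint_j}_{\theta,w,\alpha}(A_{1:K}\mid X)$. Then add one more step that bounds the attribute-level error of the PoE by the convex combination of the errors of its two experts.

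\textbf{Step 1 (joint KL).} Define $\tilde{\pr}(X,A_{1:K},Y)=\pr_*^{\doint_j}(X)\,\tilde{\pr}^{\doint_j}_{\theta,w,\alpha}(A_{1:K}\mid X)\,\pr_w(Y\mid A_{1:K})$. By Eq.~\eqref{eq:cnpc_int}, its $(X,Y)$-marginal is $\pr_*^{\doint_j}(X)\,\tilde{\pr}^{\doint_j}_{\theta,w,\alpha}(Y\mid X)$.

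The expected conditional KL on the left-hand side equals the joint KL over $(X,Y)$. The log-sum inequality over $a_{1:K}$ then bounds this by the joint KL over $(X,A_{1:K},Y)$, exactly as in Steps~1--2 before.

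\textbf{Step 2 (decomposition).} Write $\pr_*^{\doint_j}(x,a,y)=\pr_*^{\doint_j}(x)\pr_*^{\doint_j}(a\mid x)\pr_*(y\mid a)$. This factorization uses Assumption~\ref{assump:sufficient}, and also the invariance $\pr^{\doint_j}_*(Y\mid A_{1:K})=\pr_*(Y\mid A_{1:K})$ already established in the proof of Corollary~\ref{thm:int_comp_npc}.

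With this, the joint KL splits into two terms:
\begin{itemize}
\item the term $\bE_{A\sim\pr_*^{\doint_j}}[\KL(\pr_*(Y\mid A)\|\pr_w(Y\mid A))]$, which is identical to the previous proof;
\item the term $\bE_X[\KL(\pr_*^{\doint_j}(A\mid X)\|\tilde{\pr}^{\doint_j}_{\theta,w,\alpha}(A\mid X))]$.
\end{itemize}

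\textbf{Step 3 (PoE bound, the key new step).} Fix $x$ and write $p=\pr_*^{\doint_j}(\cdot\mid x)$, $q_1=\pr^{\doint_j}_\theta(\cdot\mid x)$, $q_2=\pr^{\doint_j}_w(\cdot)$. Expanding the PoE gives
\[
\KL(p\|\tilde q)=\sum_a p(a)\log\frac{p(a)}{q_1(a)^{1-\alpha}q_2(a)^{\alpha}}+\log Z_\alpha(x)=(1-\alpha)\KL(p\|q_1)+\alpha\KL(p\|q_2)+\log Z_\alpha(x).
\]
By Hölder's inequality (weighted AM--GM), $Z_\alpha(x)=\sum_a q_1^{1-\alpha}q_2^{\alpha}\le\sum_a\big((1-\alpha)q_1+\alpha q_2\big)=1$, so $\log Z_\alpha(x)\le 0$. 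Taking the expectation over $X$ yields the two weighted terms of the statement.

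This is the main obstacle: everything else is inherited from the earlier proofs. The points needing care are the sign of $\log Z_\alpha$ and support issues. We assume $q_1,q_2>0$ wherever $p>0$, or else accept infinite right-hand sides, which make the bound trivial. The endpoints $\alpha\in\{0,1\}$ are handled directly.

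\textbf{Equality.} Equality in Step~3 requires $Z_\alpha=1$, which holds when $q_1=q_2$ (the AM--GM equality case). In that case $\tilde q=q_1$, so the equality condition of the log-sum step reduces to the same constant-ratio condition as in Corollary~\ref{thm:int_comp_npc}, with $\pr^{\doint_j}_\theta$ in the denominator. This matches the stated sufficient condition.
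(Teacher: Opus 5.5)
Your proposal is correct and follows essentially the same route as the paper. You rerun the three-step argument to reduce to the PoE attribute-level KL plus the class-level term, then expand that KL as $(1-\alpha)\KL(p\|q_1)+\alpha\KL(p\|q_2)+\log Z_\alpha(x)$ and bound $\log Z_\alpha\le 0$ (your pointwise weighted AM--GM is the same inequality as the paper's H\"older step), with $q_1=q_2$ collapsing the equality condition to that of Corollary~\ref{thm:int_comp_npc}. Your remarks on supports and the endpoints $\alpha\in\{0,1\}$ are minor tidying that the paper leaves implicit.
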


\begin{proof}
Following the three steps outlined in the proof of Corollary~\ref{thm:int_comp_npc}, we obtain:
\begin{equation*}
\resizebox{\linewidth}{!}{
$
\begin{aligned}
    \bE_{X\sim\pr_*^{\doint_j}} \left[ \KL\left( \pr_*^{\doint_j}(Y\mid X) \|\ \tilde{\pr}^{\doint_j}_{\theta,w,\alpha}(Y\mid X) \right) \right]
    &\le
    \bE_{X\sim\pr_*^{\doint_j}} \left[ \KL\left( \pr_*^{\doint_j}(A_{1:K} \mid X) \|\ \tilde{\pr}^{\doint_j}_{\theta,w,\alpha}(A_{1:K} \mid X) \right) \right] \\
    &+ \bE_{A_{1:K}\sim\pr_*^{\doint_j}} \left[ \KL\left( \pr_*(Y\mid A_{1:K}) \|\ \pr_{w}(Y\mid A_{1:K}) \right) \right].
\end{aligned}
$
}
\end{equation*}
Equality holds iff for each $(x,y)$, $\frac{\pr_*^{\doint_j}(x,a_{1:K},y)}{\tilde{\pr}_{\theta,w,\alpha}^{\doint_j}(x,a_{1:K},y)}$ is some constant $c(x,y)$ for all $a_{1:K}$ with $\pr_*^{\doint_j}(x,a_{1:K},y),\ \tilde{\pr}_{\theta,w,\alpha}^{\doint_j}(x,a_{1:K},y)>0$.

Here, we apply an additional step to decompose $\bE_{X\sim\pr_*^{\doint_j}} \left[ \KL\left( \pr_*^{\doint_j}(A_{1:K} \mid X) \|\ \tilde{\pr}^{\doint_j}_{\theta,w,\alpha}(A_{1:K} \mid X) \right) \right]$.

\begin{align*}
    \KL\left( \pr_*^{\doint_j}(A_{1:K} \mid X) \|\ \tilde{\pr}^{\doint_j}_{\theta,w,\alpha}(A_{1:K} \mid X) \right)
    &= (1-\alpha)\ \KL\left( \pr^{\doint_j}_*( A_{1:K} \mid X ) \|\ \pr^{\doint_j}_\theta( A_{1:K} \mid X ) \right) \\
    &+ \alpha\ \KL\left( \pr^{\doint_j}_*( A_{1:K}\mid X) \|\ \pr^{\doint_j}_w( A_{1:K}) \right) \\
    &+ \log Z_\alpha(X)
\end{align*}

By Hölder’s inequality, $\log Z_\alpha(X)$ can be written as:
\begin{align*}
    \log Z_\alpha(X)
    &= \log \sum_{a_{1:K}} \left( \pr^{\doint_j}_\theta( A_{1:K}=a_{1:K} \mid X ) \right)^{1-\alpha} \left( \pr^{\doint_j}_w( A_{1:K}=a_{1:K}) \right)^{\alpha} \\
    &\le \log \left( \sum_{a_{1:K}}\pr^{\doint_j}_\theta( A_{1:K}=a_{1:K} \mid X ) \right)^{1-\alpha}  \left( \sum_{a_{1:K}}\pr^{\doint_j}_w( A_{1:K}=a_{1:K}) \right)^\alpha \\
    &= 0
\end{align*}
Equality holds when $\frac{\pr^{\doint_j}_\theta( A_{1:K}=a_{1:K} \mid X )}{\pr^{\doint_j}_w( A_{1:K}=a_{1:K})} = 1$ for all $a_{1:K}$.

Summarizing all steps, we have:
\begin{equation*}
\resizebox{\linewidth}{!}{
$
\begin{aligned}
    \bE_{X\sim\pr_*^{\doint_j}} \!\left[ \KL\!\left( \pr_*^{\doint_j}(Y\mid X)\ \|\ \tilde{\pr}^{\doint_j}_{\theta,w,\alpha}(Y\mid X) \right) \right]
    &\le
    (1-\alpha)\ \bE_{X\sim\pr_*^{\doint_j}} \!\left[ \KL \!\left( \pr_*^{\doint_j}(A_{1:K}\mid X)\ \|\ \pr^{\doint_j}_{\theta}(A_{1:K}\mid X) \right) \right] \\
    &+ \alpha\ \bE_{X\sim\pr_*^{\doint_j}} \!\left[ \KL \!\left( \pr_*^{\doint_j}(A_{1:K}\mid X)\ \|\ \pr^{\doint_j}_{w}(A_{1:K}) \right) \right] \\
    &+ \bE_{A_{1:K}\sim\pr_*^{\doint_j}} \left[ \KL\left( \pr_*(Y\mid A_{1:K}) \|\ \pr_{w}(Y\mid A_{1:K}) \right) \right].
\end{aligned}
$
}
\end{equation*}

Equality holds iff for each $(x,y)$, $\pr^{\doint_j}_\theta( a_{1:K} \mid x ) = \pr^{\doint_j}_w( a_{1:K})$ for all $a_{1:K}$, and $\frac{\pr_*^{\doint_j}(x,a_{1:K},y)}{\tilde{\pr}_{\theta,w,\alpha}^{\doint_j}(x,a_{1:K},y)} = \frac{\pr_*^{\doint_j}(a_{1:K}\mid x)\pr_*^{\doint_j}(y\mid a_{1:K})}{\tilde{\pr}_{\theta,w,\alpha}^{\doint_j}(a_{1:K}\mid x)\pr_{w}^{\doint_j}(y\mid a_{1:K})} = \frac{\pr_*^{\doint_j}(a_{1:K}\mid x)\pr_*^{\doint_j}(y\mid a_{1:K})}{\pr_{\theta}^{\doint_j}(a_{1:K}\mid x)\pr_{w}^{\doint_j}(y\mid a_{1:K})} = \frac{\pr_*^{\doint_j}(a_{1:K}\mid x)\pr_*(y\mid a_{1:K})}{\pr_{\theta}^{\doint_j}(a_{1:K}\mid x)\pr_{w}(y\mid a_{1:K})}$ is some constant $c(x,y)$ for all $a_{1:K}$ with $\pr_*^{\doint_j}(x,a_{1:K},y),\ \tilde{\pr}_{\theta,w,\alpha}^{\doint_j}(x,a_{1:K},y)>0$.

\end{proof}



\end{document}